\documentclass[10pt,conference]{IEEEtran}

\usepackage{times}
 \pdfoutput=1
 \usepackage[pagebackref=true,breaklinks=true,letterpaper=true,colorlinks,bookmarks=false]{hyperref}
\usepackage{url}
\usepackage{epsfig}
\usepackage{epstopdf}
\usepackage{graphicx}
\usepackage{amsmath}
\usepackage{amssymb}
\usepackage{ mathrsfs }
\usepackage{color}
\usepackage{algpseudocode}
\usepackage{algorithm}
\usepackage{placeins}
\usepackage{rotating}
\usepackage{capt-of,etoolbox}

\usepackage{ulem} 

\newcommand{\argmin}{\operatornamewithlimits{argmin}}
\DeclareMathOperator{\tr}{tr}
\DeclareMathOperator{\prox}{prox}

\DeclareMathOperator{\sign}{sgn}
\DeclareMathOperator{\Larg}{\mathcal{L}}
\renewcommand{\Re}{\mathcal{R}}

\newtheorem{thm}{Theorem}
\newtheorem{defn}[thm]{Definition}

\ifCLASSINFOpdf
 
\else

\fi

\hyphenation{op-tical net-works semi-conduc-tor}

\pagenumbering{arabic}

\begin{document}

\title{Fast Robust PCA on Graphs}


\author{Nauman Shahid$^{*}$, Nathanael Perraudin, Vassilis Kalofolias, Gilles Puy$^\dagger$, Pierre Vandergheynst\\
Email: \{nauman.shahid, nathanael.perraudin, vassilis.kalofolias, pierre.vandergheynst\}@epfl.ch, $\dagger$ gilles.puy@inria.fr \\
Signal Processing Laboratory 2 (LTS2), EPFL STI IEL, Lausanne, CH-1015, Switzerland. \\
$\dagger$ INRIA Rennes - Bretagne Atlantique, Campus de Beaulieu, FR-35042 Rennes Cedex, France
}

\makeatletter
\let\@oldmaketitle\@maketitle
\renewcommand{\@maketitle}{\@oldmaketitle
  \centering \includegraphics[width=1.0\linewidth]{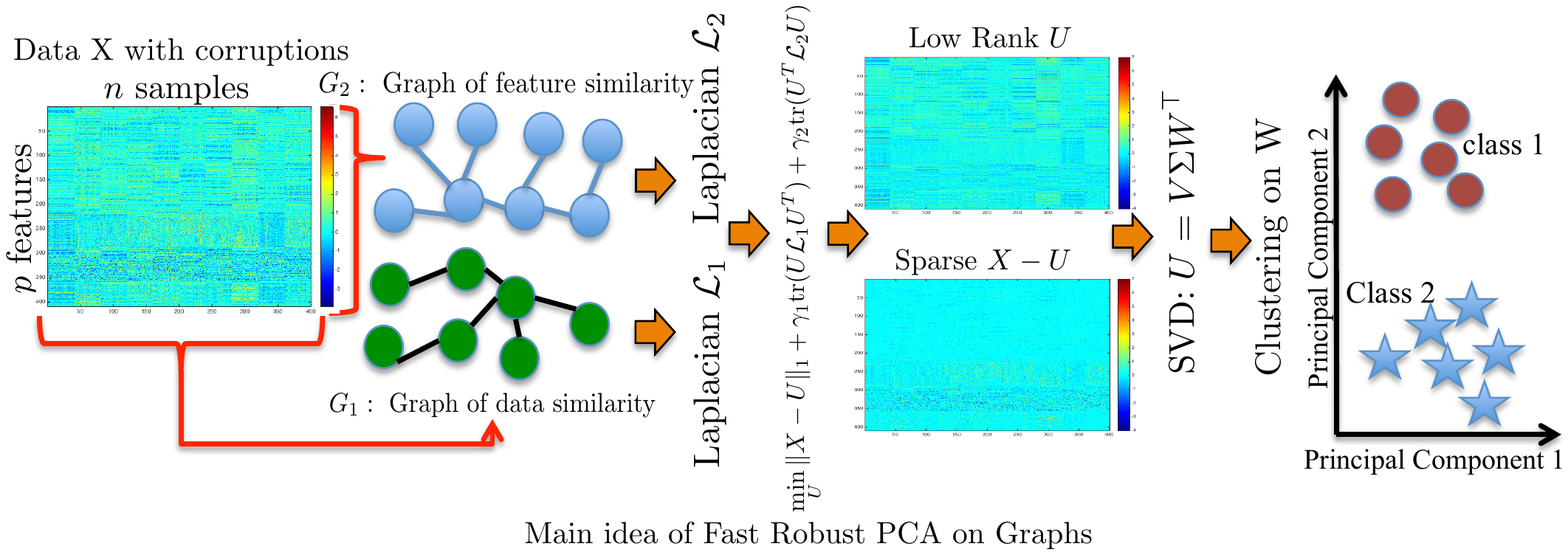}\bigskip}
\makeatother

\maketitle

\begin{abstract}
 Mining useful clusters from high dimensional data has received significant attention of the computer vision and pattern recognition community in the recent years. Linear and non-linear dimensionality reduction has played an important role to overcome the curse of dimensionality. However, often such methods are accompanied with three different problems: high computational complexity (usually associated with the nuclear norm minimization), non-convexity (for matrix factorization methods) and susceptibility to gross corruptions in the data. In this paper we propose a principal component analysis (PCA) based solution that overcomes these three issues and approximates a low-rank recovery method for high dimensional datasets. We target the low-rank recovery by enforcing two types of graph smoothness assumptions, one on the data samples and the other on the features by designing a convex optimization problem. The resulting algorithm is fast, efficient and scalable for huge datasets with $\mathcal{O}(n \log(n))$ computational complexity in the number of data samples. It is also robust to gross corruptions in the dataset as well as to the model parameters. Clustering experiments on $7$ benchmark datasets with different types of corruptions and background separation experiments on $3$ video datasets show that our proposed model outperforms $10$ state-of-the-art dimensionality reduction models. Our theoretical analysis proves that the proposed model is able to recover approximate low-rank representations with a bounded error for clusterable data.
\end{abstract}
 \begin{IEEEkeywords} robust PCA, graph, structured low-rank representation, spectral graph theory, graph regularized PCA
 \end{IEEEkeywords}
\IEEEpeerreviewmaketitle


\section{Introduction}

In the modern era of data explosion, many problems in signal and image processing, machine learning and pattern recognition require dealing with very high dimensional datasets, such as images, videos and web content. The data mining community often strives to reveal natural associations or hidden structures in the data. Over the past couple of decades  matrix factorization has been adopted as one of the key methods in this context. Given a data matrix $X \in \mathbb{R}^{p \times n}$ with $n$ $p$-dimensional data vectors, the matrix factorization can be stated as determining $V\in \mathbb{R}^{p\times c}$ and $W \in \mathbb{R}^{c\times n}$ such that $X\approx VW$ under different constraints on $V$ and $W$. 

How can matrix factorization extract structures in the data? The answer to this question lies in the intrinsic association of linear dimensionality reduction with matrix factorization. 
Consider a set of gray-scale images of the same object captured under fixed lighting conditions with a moving camera, or a set of hand-written digits with different rotations. Given that the image has $m^{2}$ pixels, each such data sample is represented by a vector in $\mathbb{R}^{m^{2}}$. However, the intrinsic dimensionality of the space of all images of the same object captured with small perturbations is much lower than $m^{2}$. Thus, dimensionality reduction comes into play. Depending on the application and the type of data, one can either use a single linear subspace to approximate the data of different classes using the standard Principal Component Analysis (PCA) \cite{abdi2010principal}, a union of low dimensional subspaces where each class belongs to a different subspace (LRR and SSC) \cite{elhamifar2013sparse,liu2013robust,vidal2014low,shahid2015robust},  or a positive subspace to extract a positive low-rank representation of the data (NMF) \cite{lee1999learning}. The clustering or community detection can then be performed on the retrieved data representation in the low dimensional space. Not surprisingly, all the above mentioned problems can be stated in the standard matrix factorization manner as shown in the models 1 to 3 of Fig.~\ref{fig:MF}.  Alternatively, the clustering quality for non-linearly separable datasets can be improved by using non-linear dimensionality reduction tools such as Laplacian Eigenmaps \cite{belkin2003laplacian} or Kernel PCA \cite{scholkopf1997kernel}. 

\begin{figure*}[htbp]
    \centering
        \centering
        \includegraphics[width=1.0\textwidth]{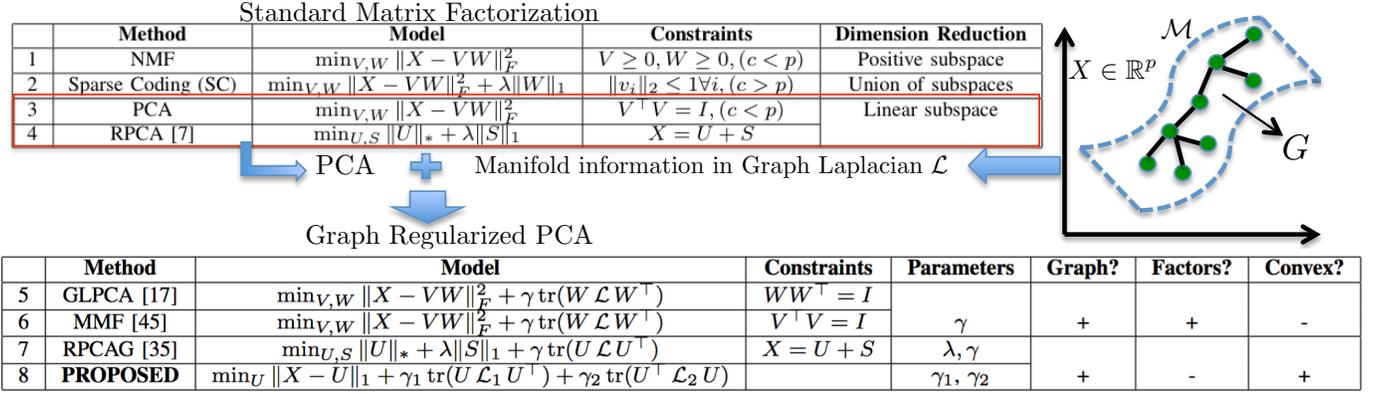}
         \caption{A summary of the matrix factorization methods with and without graph regularization. $X \in \mathbb{R}^{p\times n}$ is the matrix of $n$ $p$-dimensional data vectors, $V \in \mathbb{R}^{p\times c}$ and $W \in \mathbb{R}^{c\times n}$ are the learned factors. $U \in \mathbb{R}^{p\times n}$ is the low-rank  matrix and $S \in \mathbb{R}^{p\times n}$ is the sparse matrix. $\|\cdot\|_{F}$, $\|\cdot\|_{*}$ and $\|\cdot\|_{1}$ denote the Frobenius, nuclear and $\ell_1$ matrix norms respectively. The data manifold $\mathcal{M}$ information can be leveraged in the form of a discrete graph $G$ using the graph Laplacian $\mathcal{L} \in \mathbb{R}^{n \times n}$ resulting in various Graph Regularized PCA models.  }
        \label{fig:MF}
    \end{figure*}


In many cases low dimensional data follows some additional structure. Knowledge of such structure is beneficial, as we can use it to enhance the representativity of our models by adding structured priors \cite{jenatton2009structured, mairal2014sparse,wang2013nonnegative}. A nowadays standard way to represent pairwise affinity between objects is by using graphs. The introduction of graph-based priors to enhance matrix factorization models has recently brought them back to the highest attention of the data mining community. Representation of a signal on a graph is well motivated by the emerging field of signal processing on graphs, based on notions of spectral graph theory \cite{shuman2013emerging}. The underlying assumption is that high-dimensional data samples lie on or close to a smooth low-dimensional manifold. Interestingly, the underlying manifold can be represented by its discrete proxy, i.e. a graph. Let $G = \mathcal{(V,E)}$ be a graph between the samples of $X$, where $\mathcal{E}$ is the set of edges and $\mathcal{V}$ is the set of vertices (data samples). Let $A$ be the symmetric matrix that encodes the weighted adjacency information between the samples of $X$ and $D$ is the diagonal degree matrix with $D_{ii} = \sum_j A_{ij}$. Then the normalized graph Laplacian $\Larg$ that characterizes the graph $G$ is defined as $\Larg = D^{-1/2}(D-A)D^{-1/2}$. Exploiting the manifold information in the form of a graph can be seen as a method of incorporating local proximity information of the data samples into the dimensionality reduction framework, that can enhance the clustering quality in the low-dimensional space.

\subsection{Focus of this work}
In this paper, we focus on the application of PCA to clustering, projecting the data on a single linear subspace. We first describe PCA and its related models and then elaborate on how the data manifold information in the form of a graph can be used to enhance standard PCA. Finally, we present a novel, convex, fast and scalable  method for PCA that recovers the low-rank representation via two graph structures.  Our theoretical analysis proves that the proposed model is able to recover approximate low-rank representations with a bounded error for clusterable data, where the number of clusters is equal to the rank. Many real world datasets can be assumed to satisfy this assumption. For example, the USPS dataset which consists of ten digits.  We call such  data matrices as low-rank matrices on graphs. The clustering on these dataset can be done by recovering a clean low-rank representation.

\subsection{PCA and Related Work}
For a dataset $X\in \mathbb{R}^{p\times n}$ with $n$ $p$-dimensional data vectors, standard PCA learns the projections or principal components $W\in \mathbb{R}^{c\times n}$ of $X$ on a $c$-dimensional orthonormal basis $V\in \mathbb{R}^{p\times c}$, where $c < p$ by solving model 3 in Fig.~\ref{fig:MF}. Though non-convex, this problem has a global minimum that can be computed using Singular Value Decomposition (SVD), giving a unique low-rank representation $U = VW$.


A main drawback of PCA is its sensitivity to heavy-tailed noise due to the Frobenius norm in the objective function. Thus, a few strong corruptions  can result in erratic principal components.  Robust PCA (RPCA) proposed by Candes et al. \cite{candes2011robust} overcomes this problem by recovering the clean low-rank representation $U$ from grossly corrupted $X$ by solving model 4 in Fig.~\ref{fig:MF}. Here $S$ represents the sparse matrix containing the errors and $\|U\|_{*}$ denotes the nuclear norm of $U$, the tightest convex relaxation of $\text{rank}(U)$.

Recently, many works related to low-rank or sparse representation recovery have been proposed to incorporate the data manifold information in the form of a discrete graph into the dimensionality reduction framework \cite{jiang2013graph, zhang2013low, gao2013laplacian, cai2011graph, tao2014low,jin2014multiple,jin2014low,peng2015enhanced,du2015sparse}.  In fact, for PCA, this can be considered as a method of exploiting the local smoothness information in order to improve clustering quality. The graph smoothness of the principal components $W$ using the graph Laplacian $\Larg$ has been exploited in various works that explicitly learn $W$ and the basis $V$. We  refer to such models as \textit{factorized models}. In this context Graph Laplacian PCA (GLPCA) was proposed in \cite{jiang2013graph} (model 5 in Fig.~\ref{fig:MF}) and Manifold Regularized Matrix Factorization (MMF) in \cite{zhang2013low} (model 6 in Fig.~\ref{fig:MF}). Note that the orthonormality constraint in this model is on $V$, instead of the principal components $W$.
Later on, the authors of \cite{shahid2015robust} have generalized robust PCA by incorporating the graph smoothness (model 7 in Fig.~\ref{fig:MF}) term directly on the low-rank matrix instead of principal components. They call it Robust PCA on Graphs (RPCAG).

Models 4 to 8 can be used for clustering in the low dimensional space. However, each of them comes with its own weaknesses. GLPCA \cite{jiang2013graph} and MMF  \cite{zhang2013low} improve upon the classical PCA by incorporating graph smoothness but they are non-convex and susceptible to data corruptions. Moreover, the rank $c$ of the subspace has to be specified upfront. RPCAG \cite{shahid2015robust} is convex and builds on the robustness property of RPCA \cite{candes2011robust} by incorporating the graph smoothness directly on the low-rank matrix and improves both the clustering and low-rank recovery properties of PCA. However, it uses the nuclear norm relaxation that involves an expensive SVD step in every iteration of the algorithm.  Although fast methods for the SVD have been proposed, based on randomization  \cite{witten2013randomized,lucas2014parallel,oh2015fast}, Frobenius norm based representations \cite{zhang2014flrr,peng2015connections} or structured RPCA \cite{ayazoglu2012fast}, its use in each iteration makes it hard to scale to large datasets.

\subsection{Our Contributions}
In this paper we propose a fast, scalable,  robust and convex clustering and low-rank recovery method for potentially corrupted low-rank signals. Our contributions are:
\begin{enumerate}
\item We propose an approximate low-rank recovery method  for corrupted data by utilizing only the graph smoothness assumptions both between the samples and between the features.
\item Our theoretical analysis proves that the proposed model is able to recover approximate low-rank representations with a bounded error for clusterable data, where the number of clusters is equal to the rank. We call such a data matrix a \textit{low-rank matrix on the graph}.
\item Our model is convex and although non-smooth it can be solved efficiently, that is in linear time in the number of samples, with a few iterations of the well-known FISTA algorithm. The construction of the two graphs costs $\mathcal{O}(n\log n)$ time, where $n$ is the number of data samples.
\item The resulting algorithm is highly parallelizable and scalable for large datasets since it requires only the multiplication of two sparse matrices with full vectors and elementwise soft-thresholding operations.
\item Our extensive experimentation shows that the recovered close-to-low-rank matrix is a good approximation of the low-rank matrix obtained by solving the expensive state-of-the-art method \cite{shahid2015robust} which uses the much more expensive nuclear norm. This is observed even in the presence of gross corruptions in the data. 
\end{enumerate}

\subsection{Connections and differences with the state-of-the-art}
The idea of using two graph regularization terms has previously appeared in the work of matrix completion \cite{kalofolias2014matrix}, co-clustering \cite{gu2009co}, NMF \cite{shang2012graph}, \cite{BenziKBV16arxiv} and more recently in the context of low-rank representation \cite{yin2015dual}. However, to the best of our knowledge all these models aim to improve the clustering quality of the data in the low-dimensional space.  The co-clustering \& NMF based models which use such a scheme \cite{gu2009co}, \cite{shang2012graph} suffer from non-convexity and the works of  \cite{kalofolias2014matrix} and \cite{yin2015dual} use a nuclear-norm formulation which is computationally expensive and not scalable for big datasets. Our proposed method is different from these models in the following sense:
\begin{itemize}
\item We do not target an improvement in the low-rank representation via graphs. Our method aims to solely recover an approximate low-rank matrix with dual-graph regularization only. The underlying motivation is that one can obtain a good enough low-rank representation without using expensive nuclear norm or non-convex matrix factorization. Note that the NMF-based method   \cite{shang2012graph} targets the smoothness of factors of the low-rank while the co-clustering \cite{gu2009co} focuses on the smoothness of the labels. \textit{Our method, on the other hand, targets directly the recovery of the low-rank matrix, and not the one of the factors or labels}.
\item  We introduce the concept of low-rank matrices on graphs and provide a theoretical justification for the success of our model. The  use of PCA as a scalable and efficient clustering method using dual graph regularization has surfaced for the very first time in this paper.  
\end{itemize}

A summary of the notations used in this paper is presented in Tab.~\ref{tab:notations}.  We first introduce our proposed formulation and its optimization solution in Sections~\ref{sec:proposed_model} \&~\ref{sec:optimization} and then develop a sound motivation of the model in Section~\ref{sec:motivation}. 

 \begin{table}[htbp]
\footnotesize
\caption{A summary of notations used in this work}
\centering
\resizebox{0.5\textwidth}{!}{\begin{tabular}[t]{| c | c | } \hline
\textbf{Notation}   & \textbf{Terminology} \\\hline
$\|\cdot \|_{F}$  & matrix frobenius norm \\\hline
$\|\cdot\|_{1}$ & matrix  $\ell_1$ norm \\\hline
$n$      & number of data samples \\\hline
$p$      & number of features / pixels \\\hline
$c$      & dimension of the subspace \\\hline
$k$  & number of classes in the data set \\\hline
$X \in \mathbb{R}^{p\times n}$  & data matrix \\\hline
$U \in \mathbb{R}^{p\times n}$  & low-rank noiseless approximation of $X$ \\\hline
$U= V\Sigma W^{\top}$  & SVD of the low-rank matrix $U$ \\\hline
$V \in \mathbb{R}^{p\times c}$   & left singular vectors of $U$ / principal directions of $U$ \\\hline
$\Sigma$   & singular values of $U$ \\\hline
$W \in \mathbb{R}^{n\times c}$ & right singular vectors of $U$ / principal components of $U$ \\\hline
$A \in \mathbb{R}^{n\times n}$ or $\mathbb{R}^{p\times p}$  & adjacency matrix between samples / features of $X$ \\\hline
$D = diag(\sum_{j}A_{ij}) \forall i$      & diagonal degree matrix \\\hline
$\sigma$   & smoothing parameter of the Gaussian kernel \\\hline
$G_{1}$  & graph between the samples of $X$ \\\hline
$G_2$  & graph between the features of $X$ \\\hline
$\mathcal{(V,E)}$ & set of vertices, edges for graph \\\hline
$\gamma_1$  & penalty for $G_1$ Tikhonov regularization term \\\hline
$\gamma_2$  & penalty for $G_2$ Tikhonov regularization term \\\hline
$K$   & nearest neighbors for the construction of graphs \\\hline
$\Larg_1 \in \mathbb{R}^{n\times n}$ & Laplacian for graph $G_1$ \\\hline
$\Larg_2 \in \mathbb{R}^{p\times p}$ &  Laplacian for graph $G_2$ \\\hline
$\Larg_{1} = Q\Lambda Q^{\top}$ & eigenvalue decomposition of $\Larg_1$ \\\hline
$\Larg_2 = P\Omega P^{\top}$  & eigenvalue decomposition of $\Larg_2$ \\\hline
\end{tabular}}
\label{tab:notations}
\end{table}

\section{Fast Robust PCA on Graphs (FRPCAG)}\label{sec:proposed_model}    
Let $\Larg_{1}\in \mathbb{R}^{n\times n}$ be the graph Laplacian of the graph $G_{1}$ connecting the different samples of $X$ (columns of $X$) and $\Larg_{2} \in \mathbb{R}^{p\times p}$ the Laplacian of graph $G_{2}$ that connects the features of $X$ (rows of $X$). The  construction of these two graphs is described in Section~\ref{sec:graphs}. We denote by $U\in \mathbb{R}^{p\times n}$ the low-rank noiseless matrix that needs to be recovered from the measures $X$, then our proposed model can be written as: 
\begin{align}\label{eq:proposed}
\min_{U} \|X-U\|_{1} + \gamma_{1}\tr(U\Larg_{1} U^\top ) + \gamma_{2}\tr(U^\top \Larg_{2} U).
\end{align}
  This problem can be reformulated in the equivalent split form 
\begin{align}\label{eq:proposed1}
& \min_{U,S} \|S\|_{1} + \gamma_{1}\tr(U\Larg_{1} U^\top ) + \gamma_{2}\tr(U^\top \Larg_{2} U), \\
& \text{s.t.} ~ X = U + S, \nonumber
\end{align}
where $S$ models the sparse outliers in the data $X$. The $\| \cdot \|_1$ denotes the element-wise $L_1$ norm of a matrix.
Model ~\eqref{eq:proposed1} has close connections with the RPCAG \cite{shahid2015robust}. In fact the nuclear norm term in RPCAG has been replaced by another graph Tikhonov term. The two graph regularization terms help in retrieving an approximate low-rank representation $U$ by encoding graph smoothness assumptions on $U$ without using the expensive nuclear norm of RPCAG, therefore we call it Fast Robust PCA on Graphs (FRPCAG). The main idea of our work is summarized in the fig. of the first page of this paper.

\subsection{Optimization Solution}\label{sec:optimization}
We use the Fast Iterative Soft Thresholding Algorithm (FISTA) \cite{beck2009fast} to solve problem~\eqref{eq:proposed}. Let $g: \mathbb{R^{N}}\rightarrow \mathbb{R}$ be a convex, differentiable function with a $\beta$-Lipschitz continuous gradient $\nabla g$ and $h: \mathbb{R^{N}}\rightarrow \mathbb{R}$ a convex function with a proximity operator $\prox_{h}:\mathbb{R}^N\rightarrow\mathbb{R}^N$ defined as:
\begin{equation*}
 \prox_{\lambda h}(y) = \argmin_x \frac{1}{2} \|x-y\|_2^2 + \lambda  h(x) .
\end{equation*}
Our goal is to minimize the sum $g(x)+h(x)$, which is done efficiently with proximal splitting methods. More information about proximal operators and splitting methods for non-smooth convex optimization can be found in \cite{combettes2011proximal}.
For model~\eqref{eq:proposed}, $g(U) = \gamma_{1}\tr(U\Larg_{1}U^\top ) + \gamma_{2}\tr(U^\top \Larg_{2}U)$ and $h(U) = \|X-U\|_{1}$. The gradient of $g$ becomes
\begin{equation}\label{eq:grad}
\nabla_g(U) =  2( \gamma_{1} U\Larg_{1}  + \gamma_{2} \Larg_{2}U).
\end{equation}
We define an upper bound on the Lipschitz constant $\beta$ as $\beta \leq \beta' = 2\gamma_1 \|\Larg_1\|_2 + 2\gamma_2 \|\Larg_2\|_2$ where $\|\Larg \|_2$ is the  spectral norm (or maximum eigenvalue) of $\Larg$. Moreover, the proximal operator of the function $h$ is the $\ell_1$ soft-thresholding given by the elementwise operations (here $\circ$ is the Hadamard product)
\begin{equation}\label{eq:prox}
\prox_{\lambda h }(U) = X + \sign(U-X) \circ \max (|U-X|-\lambda ,0).
\end{equation}
The FISTA algorithm \cite{beck2009fast} can now be stated as Algorithm \ref{CHalgorithm},
\begin{algorithm}
\caption{FISTA for FRPCAG}
\label{CHalgorithm}
\begin{algorithmic}
\State INPUT: $Y_1 = X$, $U_0 = X$, $t_1 = 1$, $\epsilon > 0$
\For{ $j = 1,\dots J$ }
\State $U_{j} = \prox_{\lambda_{j}h}(Y_{j}-\lambda_{j}\nabla g(Y_{j}))$
\State $t_{j+1} = \frac{1+\sqrt{1+4t_j^2}}{2}$
\State $Y_{j+1} = U_j +\frac{t_j-1}{t_{j+1}} (U_j-U_{j-1})$
\If{$\|Y_{j+1} - Y_{j}\|_F^2 < \epsilon \| Y_{j}\|_F^2$}
\State BREAK
\EndIf
\EndFor
\State OUTPUT: $U_{j+1}$
\end{algorithmic}
\end{algorithm}
where $\lambda$ is the step size (we use $\lambda = \frac{1}{\beta '}$), $\epsilon$ the stopping tolerance and $J$ the maximum number of iterations. 

\section{Graphs Construction}\label{sec:graphs}
We use two types of graphs $G_{1}$ and $G_{2}$ in our proposed model. The graph $G_{1}$ is constructed between the data samples or the columns of the data matrix and the graph $G_{2}$ is constructed between the features or the rows of the data matrix.  The graphs are undirected and built using a standard and a fast K-nearest neighbor strategy.  The first step consists of searching the closest neighbours for all the samples using Euclidean distances.  We connect each $x_i$ to its $K$ nearest neighbors $x_j$, resulting in $|\mathcal{E}|$ number of connections. The K-nearest neighbors are non-symmetric.  The second step consists of computing the graph weight matrix $A$ as
\begin{equation*}
A_{ij} = \begin{cases}
\exp\Big(-\frac{ \|(x_i-x_j)\|^{2}_{2}}{\sigma^{2}}\Big) & \text{if $x_j$ is connected to $x_i$}\\
0 & \text{otherwise.}\\
\end{cases}
\end{equation*}
The parameter $\sigma$ can be set empirically as the average distance of the connected samples. Provided that this parameter is not big, it does not effect the final quality of our algorithm.
Finally, in the third step, the normalized graph Laplacian $\Larg = I - D^{-1/2}AD^{-1/2}$ is calculated, where $D$ is the diagonal degree matrix. This procedure has a complexity of $\mathcal{O}(n e)$ and each $A_{ij}$ can be computed in parallel. Our choice of normalized Laplacian is arbitrary and depends on the application under consideration. An advantage of using a normalized laplacian as compared to an unnormalized is that all the eigenvalues for the normalized laplacian lie between 0 and 2 for all the datasets. This eases the comparison of the spectra of the laplacians. The eigenvalues of the unnormalized laplacian can be unbounded and have different ranges for different datasets. Depending on the values of $n$ and $p$  the above computation can be done in two different ways.


\textbf{Strategy 1}: For small $n$, $p$ we can use the above strategy directly for both $G_1$ and $G_2$ even if the dataset is corrupted. Although, the computation of $A$ is $\mathcal{O}(n^{2})$, it should be noted that with sufficiently small $n$ and $p$, the graphs $G_1$ and $G_2$ can still be computed in the order of a few seconds. 

\textbf{Strategy 2}: For big or high dimensional datasets, i.e, large $n$ or large $p$ or both, we can use a similar strategy but the computations can be made  efficient ($\mathcal{O}(n\log n)$) using the FLANN library (Fast Library for Approximate Nearest Neighbors searches in high dimensional spaces) \cite{muja2014scalable}. However, the quality of the graphs constructed using this strategy is slightly lower as compared to strategy 1 due to the approximate nearest neighbor search method.  We describe the complexity of FLANN in detail in Section~\ref{sec:complexity_o}.


Thus for our work the overall quality of graphs can be divided into 3 types. 
\begin{itemize}
\item \textbf{Type A}: Good sample graph $G_1$ and good feature graph $G_2$, both constructed using strategy 1. This case corresponds to small $n$ and $p$.
\item \textbf{Type B}: Good sample graph $G_1$  using strategy 1 and noisy feature graph $G_2$  using strategy 2. This case corresponds to small $n$ but large $p$. 
\item \textbf{Type C}: Noisy sample graph $G_1$ and noisy feature graph $G_2$ both constructed using strategy 2 for large $n$ and $p$.
\end{itemize}
We report the performance of FRPCAG for these three combinations of graph types, thus the acronyms FRPCAG(A), FRPCAG(B) and FRPCAG(C). Although the graph quality is lower if FLANN is used for corrupted data, our experiments for MNIST dataset show that our proposed model attains better results than other state-of-the-art models even with low quality graphs.
 
%
%
%
%

 \vspace{-0.2cm}   
\section{Our Motivation: Low-rank matrix on graphs}\label{sec:motivation}
In this section we lay down the foundation and motivation of our method and take a  step towards a theoretical analysis of FRPCAG. We build the motivation behind FRPCAG with a simple convincing demonstration. We start by answering the question: \textit{Why do we need two graphs?} This discussion ultimately leads to the introduction of a new concept, the \textit{low-rank matrix on a graph}. The latter models clusterable data and facilitates our theoretical analysis. 

\vspace{-0.2cm}
\subsection{The graph of features provides a basis for data}\label{sec:m_gf}
Consider  a simple example  of the digit $3$ from the traditional USPS dataset. We vectorize all the images and form a data matrix $X$, whose columns consist of different samples of digit $3$ from the USPS dataset. In order to motivate the need of the graph of features we build the $10$ nearest neighbors graph (of features), i.e, a graph between the rows of $X$ using the FLANN strategy of Section \ref{sec:graphs}.  Fig.~\ref{fig:exp3} shows the eigenvectors of the Laplacian denoted by $P$. We observe that they have a $3$-like shape. In Fig.~\ref{fig:exp3}, we also plot the eigenvectors associated to the experimental covariance matrix
$C$\footnote{The experimental covariance matrix is computed as $C = \frac{\tilde{X}\tilde{X}^\top}{n}$, where $n$ is the number of samples and $\tilde{X} = X - \mu_X$ for $\mu_X = \frac{1}{nd}\sum_{i=1}^d \sum_{j=1}^n X_{ij}.$ This definition is motivated in \cite{2016arXiv160102522P}.}. We observe that both sets of eigenvectors are similar. This is confirmed by computing the following matrix:
\begin{equation} \label{eq:Fourier_cov_matrix}
\Gamma = P^\top C P
\end{equation}
In order to measure the level of alignment between the orthogonal basis $P$ and the one behind $C$, we use the following ratio:
\begin{equation}
s_r(\Gamma) = \left(\frac{\sum_\ell \Gamma_{\ell,\ell}^2}{\sum_{\ell_1}\sum_{\ell_2} \Gamma_{\ell_1,\ell_2}^2} \right)^{\frac{1}{2}} =\frac{\| \rm{diag}(\Gamma) \|_2}{\| \Gamma \|_F}.
\end{equation}
When the two bases are aligned, the covariance matrix $C$ and the graph Laplacian $L$ are simultaneously diagonalizable, giving a ratio equal to $1$. On the contrary, when the bases are not aligned, the ratio is close to $\frac{1}{p}$, where $p$ is the dimension of the dataset. Note that an alternative would be to compute directly the inner product between $P$ and the eigenvectors of $C$. However, using $\Gamma$ we implicitly weight the eigenvectors of $C$ according to their importance given by their corresponding eigenvalues. 

In the special case of the digit $3$, we obtain a ratio $s_r(\Gamma_3) = 0.97$, meaning that the main covariance eigenvectors are well aligned to the graph eigenvectors. Fig.~\ref{fig:exp3} shows a few eigenvectors of both sets and the matrix $\Gamma$. 
\begin{figure}[htb!]
\begin{center}
\includegraphics[width=0.225\textwidth]{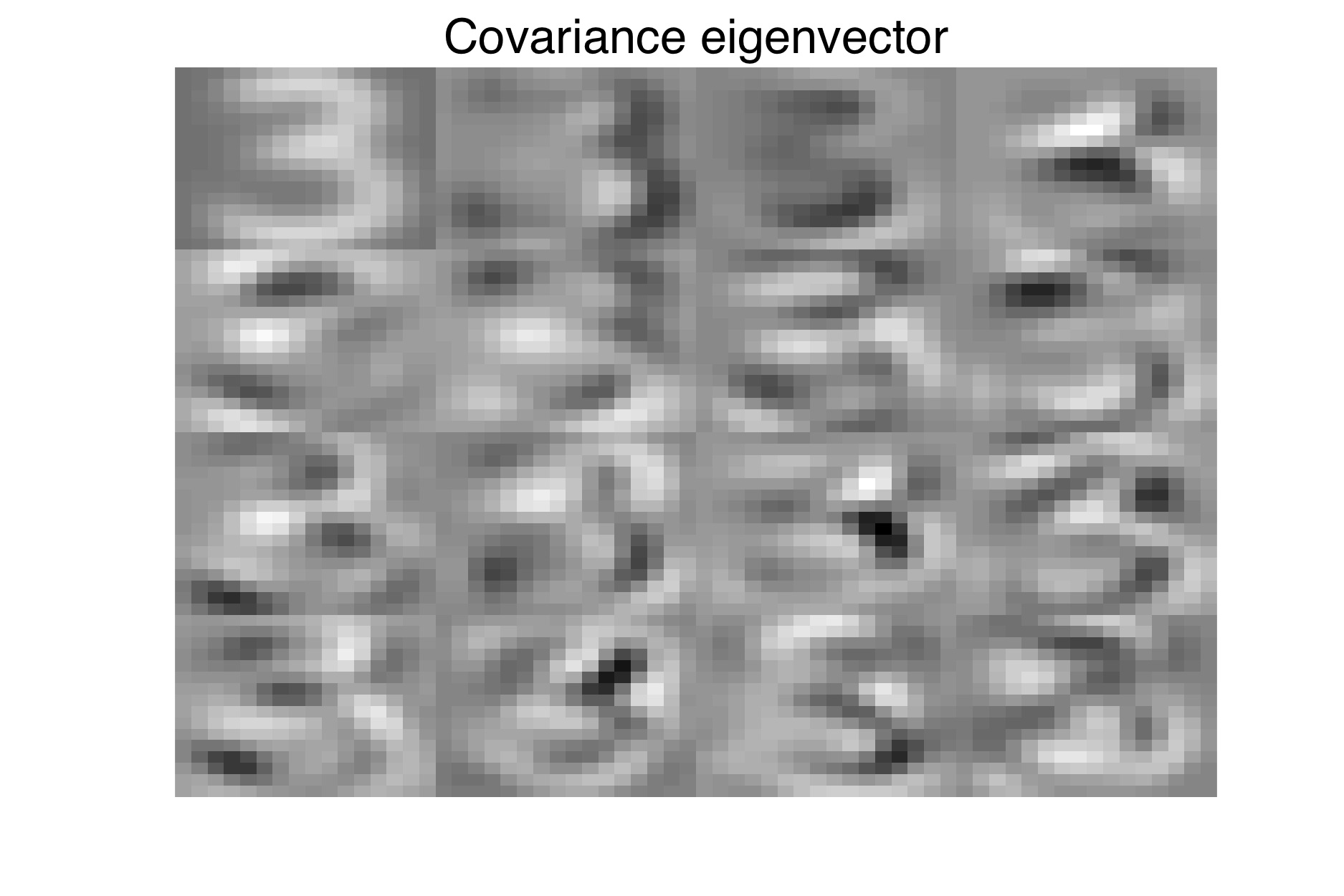} 
\includegraphics[width=0.225\textwidth]{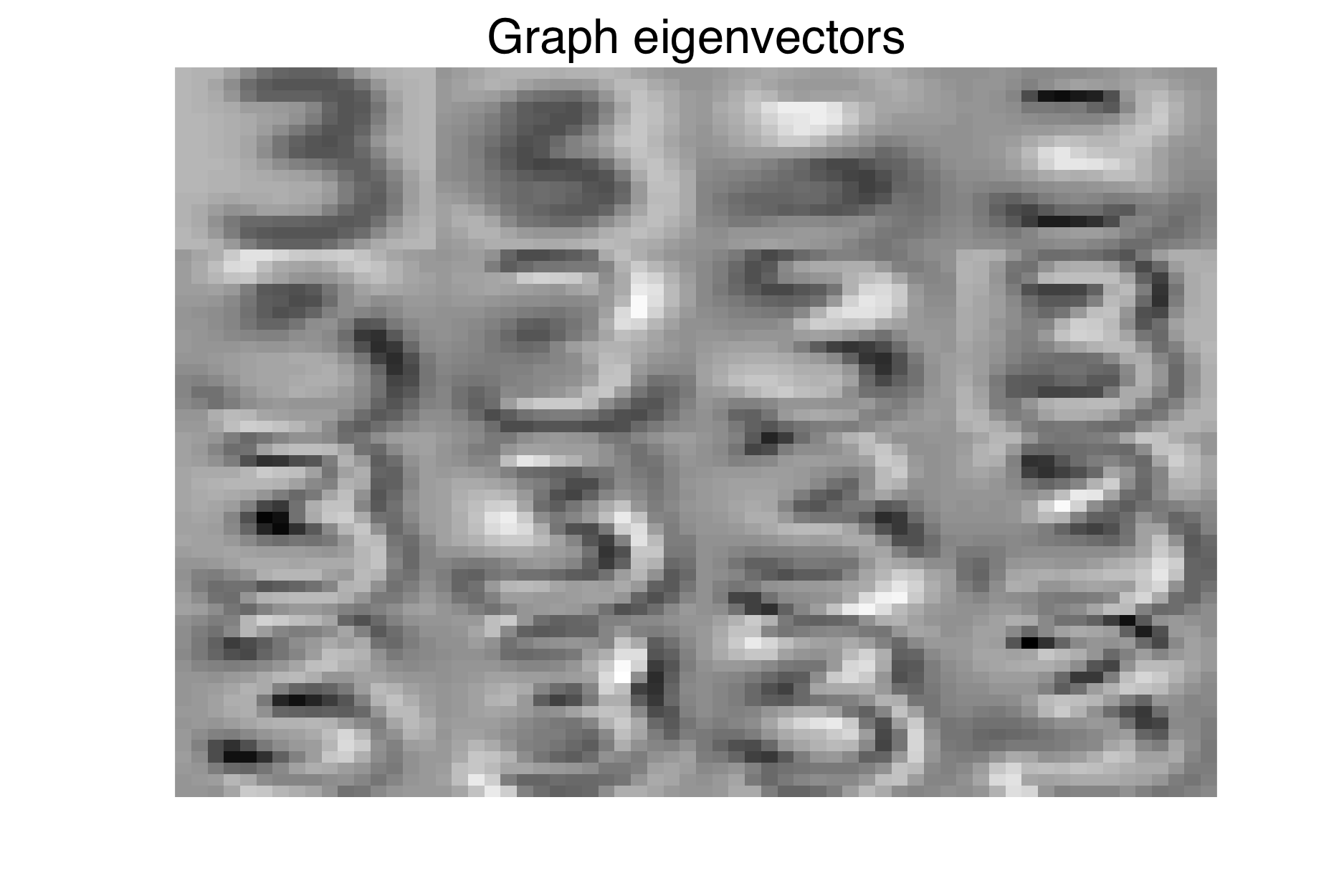} \\
\includegraphics[width=0.225\textwidth]{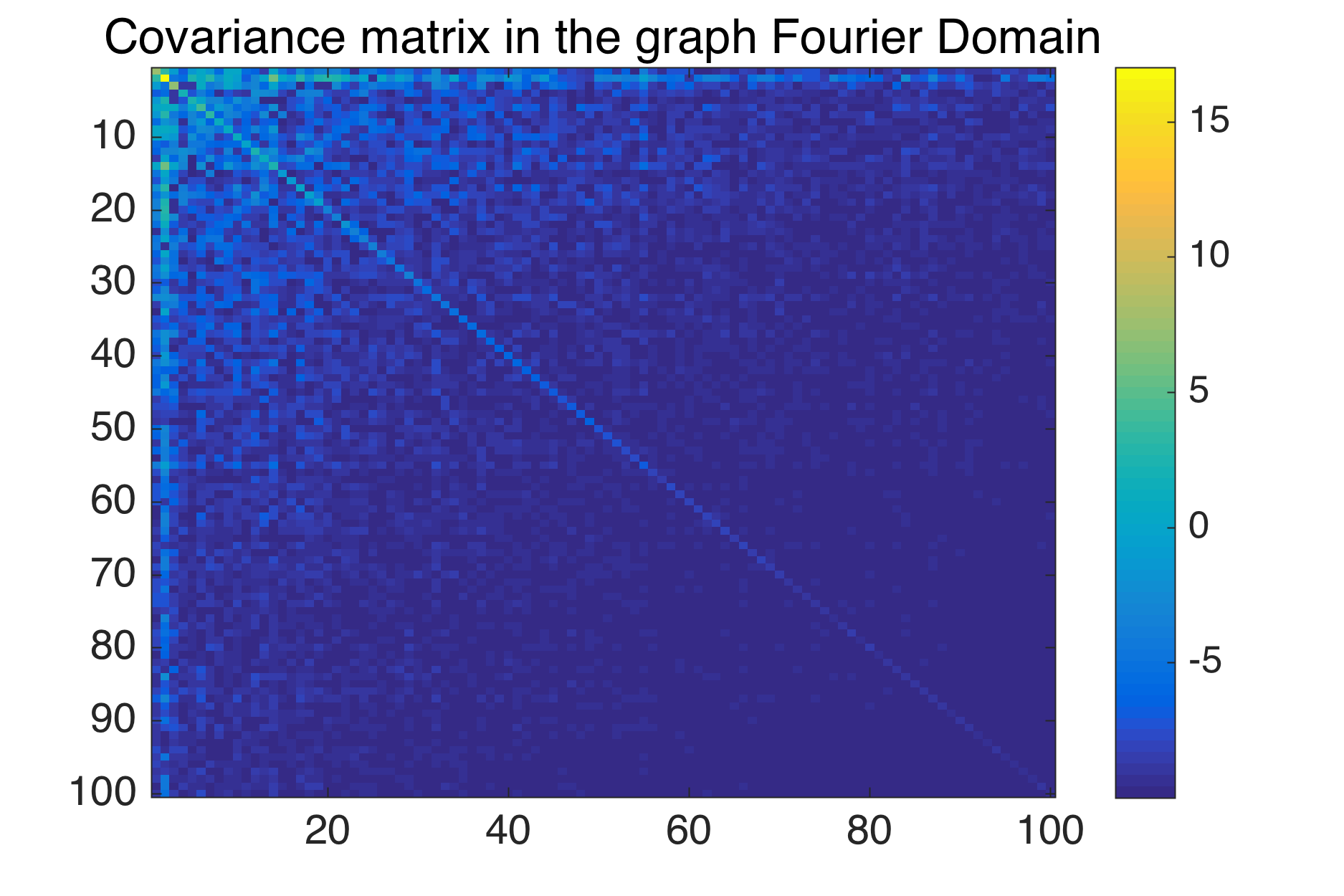}
\end{center}
\caption{Studying the number $3$ of USPS. Left: Covariance eigenvectors associated with the $16$ highest eigenvalues. Right: Laplacian eigenvectors associated to the $16$ smallest non-zero eigenvalues. Because of stationarity, Laplacian eigenvectors are similar to the covariance eigenvectors. Bottom: $\Gamma_3 = P^\top C_3 P$ in dB. Note the diagonal shape of the matrix implying that $P$ is aligned with the eigenvectors of $C$.}
\label{fig:exp3}
\end{figure}
This effect has been studied in \cite{2016arXiv160102522P} where the definition of stationary signals on graphs is proposed. A similar idea is also the motivation of the Laplacianfaces algorithm~\cite{he2005face}. A closer look at the bottom figure of Fig.~\ref{fig:exp3} shows that most of the energy in the diagonal is concentrated in the first few entries. This shows that the first few eigenvectors of the Laplacian are more aligned with the eigenvectors of the covariance matrix. This phenomena implies that the digit $3$ of the USPS dataset is low-rank, i.e, only the first few eigenvectors (corresponding to the low eigenvalues) are enough to serve as the features for this dataset.

Of course, FRPCAG also acts on the full dataset. Let us analyze how the graph eigenvectors evolve when all digits are taken into account. Fig.~\ref{fig:exp4} shows the Laplacian and covariance eigenvectors for the full USPS dataset. Again we observe some alignment: $s_r(\Gamma) = 0.82$. 
\begin{figure}[htb!]
\begin{center}
\includegraphics[width=0.225\textwidth]{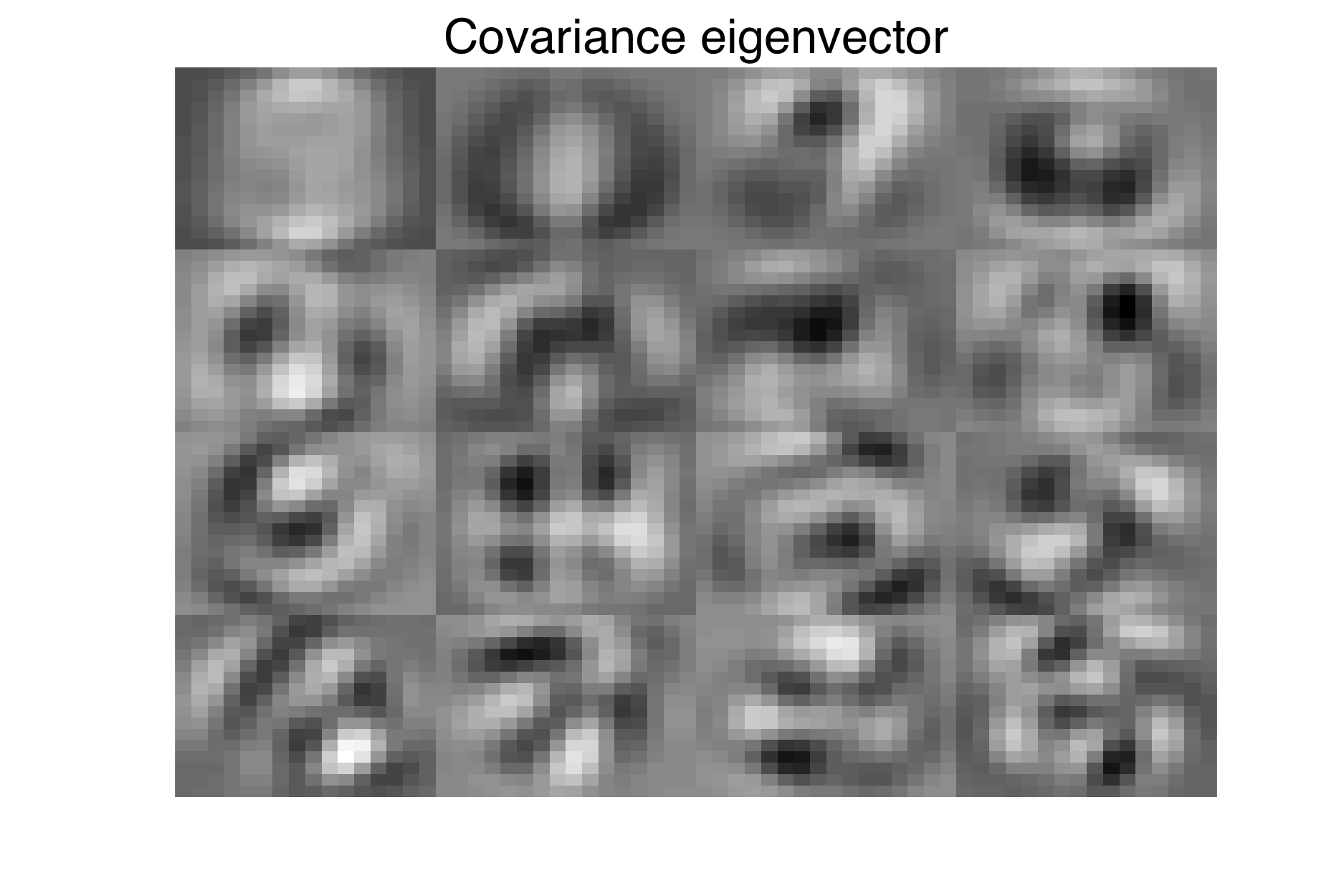} 
\includegraphics[width=0.225\textwidth]{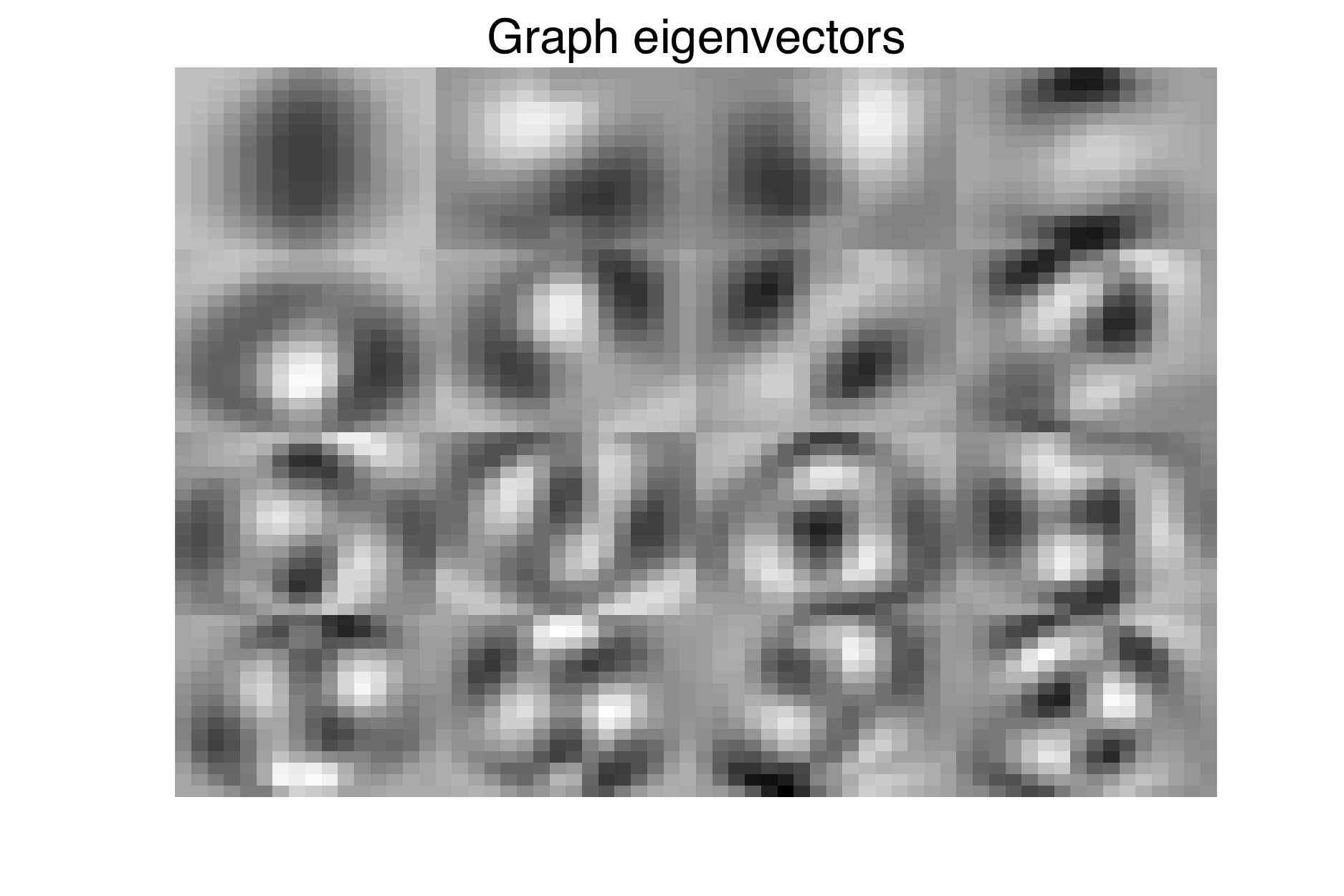} \\
\includegraphics[width=0.225\textwidth]{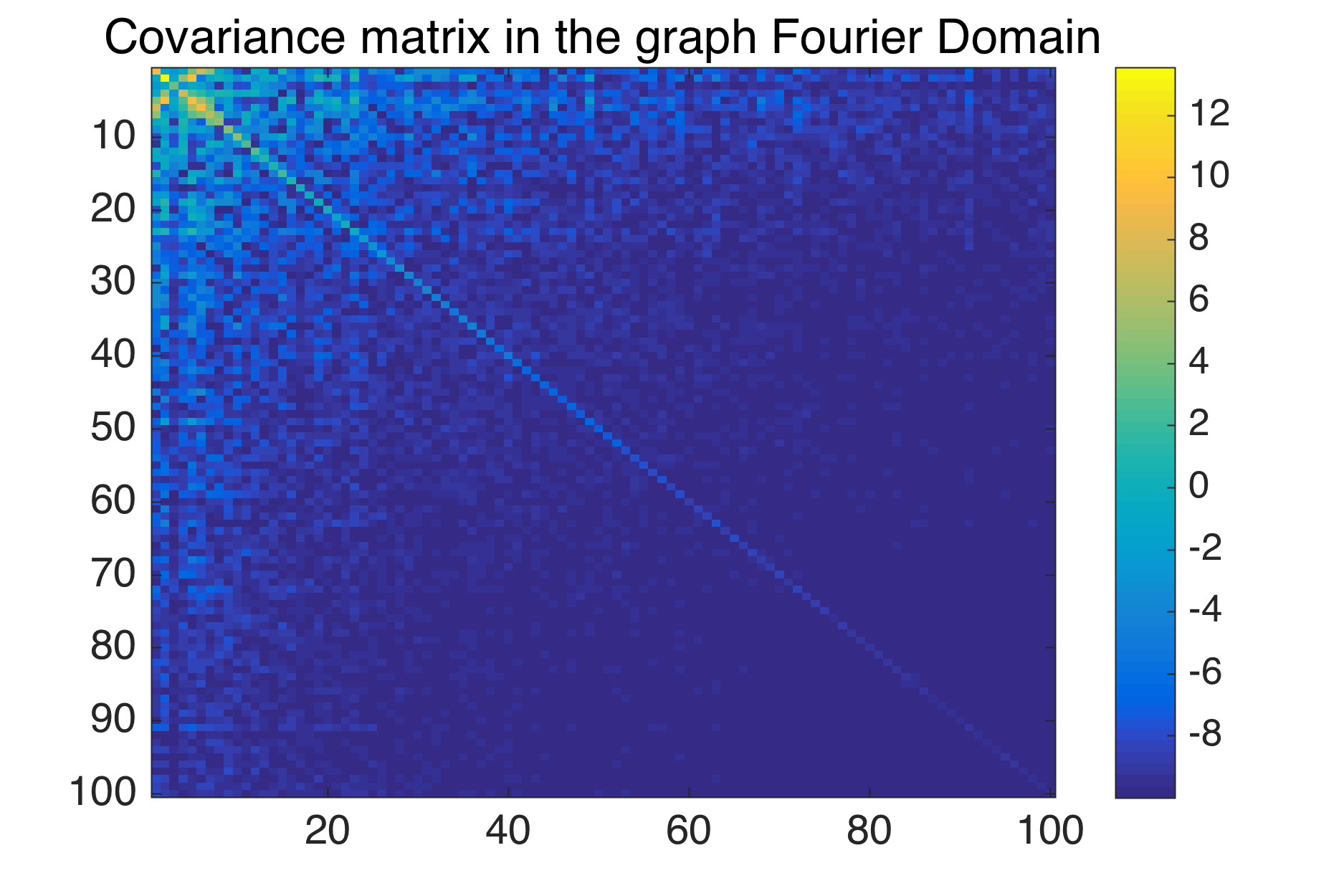}
\end{center}
\caption{Studying the full USPS dataset. Left: Covariance eigenvectors associated with the $16$ highest eigenvalues. Right: Laplacian eigenvectors associated to the $16$ smallest non-zero eigenvalues. Because of stationarity, Laplacian eigenvectors are similar to the covariance eigenvectors. Bottom: $\Gamma = P^\top C P$ in dB. Note the diagonal shape of the matrix implying that $P$ is aligned with the eigenvectors of $C$.}
\label{fig:exp4}
\end{figure}

From this example, we can conclude that every column of a  low-rank matrix $X$ lies approximately in the span of the eigenvectors $P_{k_2}$ of the features graph, where $k_2$ denotes the eigenvectors corresponding to the smallest $k_2$ eigenvalues. This is similar to PCA, where a low-rank matrix is represented in the span of the first few principal directions or atoms of the basis. Alternately,  the Laplacian eigenvectors are meaningful features for the USPS dataset. Let the eigenvectors $P$ of $\Larg_2$ be divided into two sets $(P_{k_2} \in \Re^{p\times k_2}, \bar{P}_{k_2} \in \Re^{p \times (p - k_2)})$. Note that the columns of $P_{k_2}$ contain the eigenvectors corresponding to the low graph frequencies and $\bar{P}_{k_2}$ contains those corresponding to higher graph frequencies. Then we can write, $X = X^{*} + E$, where $X^{*}$ is the low-rank part and $E$ models the noise or corruptions. Thus,
$$X = P_{k_2} A + \bar{P}_{k_2}\bar{A} ~ ~\text{and} $$  $$X^{*} =  P_{k_2} A  $$
where $A\in \Re^{k_2 \times n}$ and $\bar{A} \in \Re^{(p - k_2) \times n}$. From Fig.~\ref{fig:exp3} it is also clear that $\|\bar{P}_{k_2}\bar{A}\|_F \ll \|P_{k_2} A\|_F$ for a specific value of $k_2$.

\subsection{The graph of samples provides embedding for data}
The smallest eigenvectors of the graph of samples provide an embedding of the data in the low-dimensional space \cite{belkin2003laplacian}. This has a similar interpretation as the principal components in PCA. We argue that every row of a  low-rank matrix lies in  the span of the first few eigenvectors of the graph of samples. This is similar to representing every row of the low-rank matrix as the span of the principal components. Thus, the graph of samples $\Larg_1$ encodes a smooth non-linear map towards the principal components  of the underlying manifold defined by the graph $\Larg_1$. In other words, minimization with respect to  $\tr(U \Larg_1 U^\top )$ forces  the principal components of the data to be aligned with the eigenvectors $Q$ of the graph $\Larg_1$ which correspond to the smallest eigenvalues $\lambda_j$. This is the heart of  many algorithms in clustering \cite{ng2002spectral} and dimensionality reduction \cite{belkin2003laplacian}. In our present application this term has two effects. Firstly, when the data has a class structure, the graph of samples enforces the low-rank $U$ to benefit from this class structure. This results in an enhanced clustering of the low-rank signals. Secondly, it will force that the low-rank $U$  of the signals is well represented by the first few Laplacian eigenvectors associated to low $\lambda_{j}$. Let the eigenvectors $Q$ of $\Larg_1$ be divided into two sets $(Q_{k_1} \in \Re^{n\times k_1}, \bar{Q}_{k_1} \in \Re^{n \times (n - k_1)})$, where $k_1$ denotes the eigenvectors in $Q$ corresponding to the smallest $k_1$ eigenvalues. Note that the columns of $Q_{k_1}$ contain the eigenvectors corresponding to the low graph frequencies and $\bar{Q}_{k_1}$ contains those corresponding to higher graph frequencies. Then, we can write:
$$X = BQ^{\top}_{k_1} + \bar{B}\bar{Q}^{\top}_{k_1}  ~ ~\text{and} $$ $$X^{*} = BQ^{\top}_{k_1}$$
where $B\in \Re^{p \times k_1}$ and $\bar{B} \in \Re^{p \times (n - k_1)}$. As argued in the previous subsection, $\|\bar{B}\bar{Q}^{\top}_{k_1}\|_F   \ll  \| BQ^{\top}_{k_1} \|_F$.

\subsection{Low-rank matrix on graphs}\label{sec:LR}
From the above explanation related to the role of the two graphs, we can conclude the following facts about the representation of any clusterable low-rank  matrix $X^*$.
\begin{enumerate}
\item It can be represented as a linear combination of the  Laplacian eigenvectors of the graph of features, i.e, $X^* = P_{k_2} A $.
\item It can also be represented as a linear combination of the Laplacian eigenvectors of the graph of samples, i.e, $X^* = BQ^{\top}_{k_1} $. 
\end{enumerate}

As already pointed out, only the first $k_1$ or $k_2$ eigenvectors of the graphs correspond to the low frequency information, therefore, the other eigenvectors correspond to noise. We are now in a position to define \textit{low-rank matrix on graphs}.

\begin{defn}
A matrix $X^*$ is $(k_1, k_2)$-low-rank on the graphs $\Larg_1$ and $\Larg_2$ if $(X^*)_i^\top \in {\rm span}(Q_{k_1})$ for all $i = 1, \ldots, p$, and $(X^*)_j \in {\rm span}(P_{k_2})$ for all $j = 1, \ldots, n$. The set of  $(k_1, k_2)$-low-rank matrices on the graphs $\Larg_1$ and $\Larg_2$ is denoted by $\mathcal{LR}(Q_{k_1}, P_{k_2})$.
\end{defn}

We note here that $X^* \in {\rm span}(P_{k_2})$ means that the columns of $X^*$ are in ${\rm span}(P_{k_2})$, i.e, $(X^*)_i \in {\rm span}(P_{k_2})$, for all $i = 1, \ldots, n$, where for any matrix $A$, $(A)_i$ is its $i^{th}$ column vector.


\subsection{Theoretical Analysis}
The lower eigenvectors $Q_{k_1}$ and $P_{k_2}$ of $\Larg_1$ and $\Larg_2$ provide features for any $X \in \mathcal{LR}(Q_{k_1}, P_{k_2})$. Now we are ready to formalize our findings mathematically and prove that any solution of \eqref{eq:proposed} yields an approximately low-rank matrix. In fact, we prove this for  any proper, positive, convex and lower semi-continuous loss function $\phi$ (possibly $\ell_p$-norms $\|\cdot\|_1$, $\|\cdot\|_2^2$, ..., $\|\cdot\|_p^p$). We re-write \eqref{eq:proposed} with a general loss function $\phi$
\begin{equation}\label{eq:optim}
\min_{U} \phi(U - X) + \gamma_1 \tr(U \Larg_1 U^\top) + \gamma_2 \tr(U^\top \Larg_2 U)
\end{equation}

 Before presenting our mathematical analysis we gather a few facts which will be used later:
\begin{itemize}
\item We assume that the observed data matrix $X$ satisfies $X = X^* + E$ where $X^* \in \mathcal{LR}(Q_{k_1}, P_{k_2})$ and $E$ models noise/corruptions. Furthermore, for any $X^* \in \mathcal{LR}(Q_{k_1}, P_{k_2})$ there exists a matrix $C$ such that $X^* = P_{k_2} C Q_{k_1}^\top$.
\item $\Larg_1 = Q\Lambda Q^\top = Q_{k_1} \Lambda_{k_1} Q^{\top}_{k_1} + \bar{Q}_{k_1} \bar{\Lambda}_{k_1} \bar{Q}^{\top}_{k_1} $, where $\Lambda_{k_1} \in \Re^{k_1 \times k_1}$ is a diagonal matrix of lower eigenvalues  and    $\bar{\Lambda}_{k_1} \in \Re^{(n - k_1) \times (n - k_1)}$ is also a diagonal matrix of higher graph eigenvalues. All values in $\Lambda$ are sorted in increasing order, thus $0  = \lambda_0 \leq \lambda_1 \leq \cdots \leq \lambda_{k_1} \leq \cdots \leq \lambda_{n-1} $. The same holds for $\Larg_2$ as well.
\item For a $K$-nearest neighbors graph constructed from a $k_1$-clusterable data (along samples) one can expect $\lambda_{k_1}/\lambda_{k_1 + 1} \approx 0$ as $\lambda_{k_1} \approx 0$ and $\lambda_{k_1} \ll \lambda_{k_1 + 1}$. The same holds for the graph of features $\Larg_2$ as well.
\item For the proof of the theorem, we will use the fact that for any $X \in \Re^{p \times n}$, there exist $A \in \Re^{k_2 \times n}$ and $\bar{A} \in \Re^{(n-k_2) \times n}$ such that $X = P_{k_2} A + \bar{P}_{k_2} \bar{A}$, and $B \in \Re^{p \times k_1}$ and $\bar{B} \in \Re^{p \times (n-k_1)}$ such that $X = B Q_{k_1}^\top + \bar{B} \bar{Q}_{k_1}^\top$.
\end{itemize}

\begin{thm}\label{thm:gfrpcagnoisy} 
Let $X^* \in \mathcal{LR}(Q_{k_1}, P_{k_2})$, $\gamma>0$, and $E \in \Re^{p \times n}$. Any solution $U^*  \in \Re^{p \times n}$ of \eqref{eq:optim} with $\gamma_1 = \gamma/\lambda_{k_1+1}$, $\gamma_2 = \gamma/\omega_{k_2+1}$ and $X = X^* + E$ satisfies
\begin{align}\label{eq:bound}
& \phi(U^* - X) + \gamma_1 \| U^{*} \bar{Q}_{k_1}\|_F^2 + \gamma_2 \|\bar{P}_{k_2}^\top  U^*\|_F^2 \nonumber \\
&  \leq \phi(E) + \gamma \|X^*\|_F^2 \Big( \frac{\lambda_{k_1}}{\lambda_{k_1+1}} + \frac{\omega_{k_2}}{\omega_{k_2+1}} \Big).
\end{align}
where $\lambda_{k_1}, \lambda_{k_{1}+1} $ denote the $k_1, k_1 + 1 $ eigenvalues of ${\Larg}_1$,  $\omega_{k_2}, \omega_{k_{2}+1}$ denote the $k_2, k_2 + 1 $ eigenvalues of ${\Larg}_2$.
\end{thm}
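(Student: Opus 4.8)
The plan is to extract the whole estimate from a single use of optimality: since $U^*$ minimizes the objective $J(U)=\phi(U-X)+\gamma_1\tr(U\Larg_1 U^\top)+\gamma_2\tr(U^\top\Larg_2 U)$ of \eqref{eq:optim}, we have $J(U^*)\le J(\tilde U)$ for every competitor $\tilde U$, and the entire argument rests on choosing the test point $\tilde U=X^*$. The left-hand side of \eqref{eq:bound} will come from lower-bounding $J(U^*)$ and the right-hand side from upper-bounding $J(X^*)$. To make both bounds tractable I would first diagonalize the two Tikhonov terms: with $\Larg_1=Q\Lambda Q^\top$ and $\Larg_2=P\Omega P^\top$ one has $\tr(U\Larg_1 U^\top)=\sum_j \lambda_j\|U q_j\|_2^2$ and $\tr(U^\top\Larg_2 U)=\sum_\ell \omega_\ell\|p_\ell^\top U\|_2^2$, where $q_j,p_\ell$ are the columns of $Q,P$. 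I then split each sum at the spectral gap into a low-frequency block ($j\le k_1$, resp. $\ell\le k_2$) and a high-frequency block.

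For the upper bound $J(X^*)$ I would invoke the structural fact $X^*=P_{k_2}CQ_{k_1}^\top$ listed before the theorem. First, $\phi(X^*-X)=\phi(-E)=\phi(E)$ since $X=X^*+E$ and $\phi$ is a norm-type loss. Second, orthonormality of the eigenvectors gives $X^*\bar Q_{k_1}=P_{k_2}C\,Q_{k_1}^\top\bar Q_{k_1}=0$ and $\bar P_{k_2}^\top X^*=0$, so the high-frequency blocks of both trace terms vanish and only the low-frequency blocks survive. Bounding $\lambda_j\le\lambda_{k_1}$ for $j\le k_1$ and using $\|X^*Q_{k_1}\|_F^2=\|X^*\|_F^2$ (a consequence of $X^*\bar Q_{k_1}=0$ and orthogonality of $Q$) yields $\tr(X^*\Larg_1 (X^*)^\top)\le\lambda_{k_1}\|X^*\|_F^2$, and symmetrically $\tr((X^*)^\top\Larg_2 X^*)\le\omega_{k_2}\|X^*\|_F^2$. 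Substituting $\gamma_1=\gamma/\lambda_{k_1+1}$ and $\gamma_2=\gamma/\omega_{k_2+1}$ produces exactly the factors $\lambda_{k_1}/\lambda_{k_1+1}$ and $\omega_{k_2}/\omega_{k_2+1}$ appearing on the right of \eqref{eq:bound}.

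For the lower bound $J(U^*)$ I would keep $\phi(U^*-X)$ untouched, discard the nonnegative low-frequency blocks of the two trace terms, and bound the high-frequency blocks below using $\lambda_j\ge\lambda_{k_1+1}$ for $j>k_1$ and $\omega_\ell\ge\omega_{k_2+1}$ for $\ell>k_2$. This gives $\gamma_1\tr(U^*\Larg_1 (U^*)^\top)\ge\gamma_1\lambda_{k_1+1}\|U^*\bar Q_{k_1}\|_F^2$ and $\gamma_2\tr((U^*)^\top\Larg_2 U^*)\ge\gamma_2\omega_{k_2+1}\|\bar P_{k_2}^\top U^*\|_F^2$, so each high-frequency energy appears with the common constant $\gamma=\gamma_1\lambda_{k_1+1}=\gamma_2\omega_{k_2+1}$, reproducing (under these identifications) the two penalty terms of \eqref{eq:bound}. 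Chaining $J(U^*)\le J(X^*)$ with the two one-sided estimates then delivers the claimed inequality.

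The main obstacle I anticipate is the spectral bookkeeping rather than any deep inequality: one must apply the low/high-frequency split consistently to $\Larg_1$ acting on the right (sample directions) and $\Larg_2$ acting on the left (feature directions), and verify that the membership $X^*\in\mathcal{LR}(Q_{k_1},P_{k_2})$ forces \emph{both} high-frequency blocks of $X^*$ to be exactly zero, so that only $\lambda_{k_1}$ and $\omega_{k_2}$ (not the full spectra) enter the upper bound. A secondary point to check is the symmetry $\phi(-E)=\phi(E)$, which holds for the $\ell_p$-type losses the theorem targets; for a fully general proper convex lower semi-continuous $\phi$ one would instead carry $\phi(-E)$ through the argument, the replacement by $\phi(E)$ being precisely where the norm structure is used.
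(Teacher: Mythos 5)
Your proposal is correct and follows essentially the same route as the paper's proof: compare the optimal $U^*$ against the test point $X^*$, lower-bound the two Tikhonov terms at $U^*$ by $\lambda_{k_1+1}\|U^*\bar{Q}_{k_1}\|_F^2$ and $\omega_{k_2+1}\|\bar{P}_{k_2}^\top U^*\|_F^2$, upper-bound them at $X^*=P_{k_2}CQ_{k_1}^\top$ by $\lambda_{k_1}\|X^*\|_F^2$ and $\omega_{k_2}\|X^*\|_F^2$, and substitute the prescribed $\gamma_1,\gamma_2$ (your eigenvector-sum bookkeeping is just a rewriting of the paper's block decomposition via $B,\bar{B}$ and $C$). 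Your remark that $\phi(X^*-X)=\phi(-E)$ requires symmetry of $\phi$ is a fair point of care that the paper silently elides by writing $\phi(E)$ directly.
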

\begin{proof}
As $U^*$ is a solution of \eqref{eq:optim}, we have
\begin{align}\label{eq:optim_bound}
& \phi(U^* - X) + \gamma_1 \tr(U^* \Larg_1 (U^*)^\top) + \gamma_2 \tr((U^*)^\top \Larg_2 U^*) \nonumber \\
 & \leq 
\phi(E) +  \gamma_1 \tr(X^* L_1 (X^*)^\top) + \gamma_2 \tr((X^*)^\top \Larg_2 X^*).
\end{align}
Using the facts that $\Larg_1 = Q_{k_1} \Lambda_{k_1} Q_{k_1}^\top + \bar{Q}_{k_1} \bar{\Lambda}_{k_1} \bar{Q}_{k_1}^\top$ and that there exists $B \in \Re^{p \times k_1}$ and $\bar{B} \in \Re^{p \times (n-k_1)}$ such that $U^* = B Q_{k_1}^\top + \bar{B} \bar{Q}_{k_1}^\top$, we obtain
\begin{align*}
& \tr(U^* \Larg_1 (U^*)^\top) 
= \tr(B  \Lambda_{k_1}  B^\top) + \tr(\bar{B}  \bar{\Lambda}_{k_1} \bar{B}^\top)
\nonumber \\
& \geq \tr(\bar{\Lambda}_{k_1} \bar{B}^\top \bar{B}) \geq \lambda_{k_1+1}\|\bar{B}\|_F^2  = \lambda_{k_{1}+1} \|U^* \bar{Q}_{k_1}\|_F^2.
\end{align*}
Then, using the fact that there exists $C \in \Re^{k_2 \times k_1}$ such that $X^* = P_{k_2} C Q_{k_1}^\top$, we obtain
\begin{align*}
\tr(X^* L_1 (X^*)^\top) = \tr(C \Lambda_{k_1} C^\top) \leq \lambda_{k_1} \|C\|_F^2 = \lambda_{k_1} \|X^*\|_F^2.
\end{align*}
Similarly, we have
\begin{align*}
\tr((U^*)^\top \Larg_2 U^*) 
\geq
\omega_{k_2+1} \|\bar{P}_{k_2}^\top U^*\|_F^2,
\end{align*}
\begin{align*}
\tr((X^*)^\top \Larg_2 X^*) \leq \omega_{k_2} \|X^*\|_F^2.
\end{align*}
Using the four last bounds in \eqref{eq:optim_bound} yields
\begin{align*}
& \phi(U^* - X) + \gamma_1 \lambda_{k_1+1} \|U^* \bar{Q}_{k_1}\|_F^2 + \gamma_2 \omega_{k_2+1} \|\bar{P}_{k_2}^\top U^*\|_F^2 
\leq \nonumber \\
& \phi(E) +  \gamma_1 \omega_{k_1} \|X^*\|_F^2 + \gamma_2 \omega_{k_2} \|X^*\|_F^2,
\end{align*}
which becomes 
\begin{align*}
& \phi(U^* - X) + \gamma \|U^* \bar{Q}_{k_1}\|_F^2 + \gamma \|\bar{P}_{k_2}^\top U^*\|_F^2 \nonumber \\
& \leq 
\phi(E) +  \gamma \|X^*\|_F^2 \left( \frac{\lambda_{k_1}}{\lambda_{k_1+1}}  + \frac{\omega_{k_2}}{\omega_{k_2+1}} \right)
\end{align*}
for our choice of $\gamma_1$ and $\gamma_2$. This terminates the proof.
\end{proof}

\subsection{Remarks on the theoretical analysis}
\eqref{eq:bound} implies that
\begin{align*}
& \|U^* \bar{Q}_{k_1}\|_F^2 + \|\bar{P}_{k_2}^\top U^*\|_F^2  \leq 
\frac{1}{\gamma} \phi(E) +  \|X^*\|_F^2 \left( \frac{\lambda_{k_1}}{\lambda_{k_1+1}}  + \frac{\omega_{k_2}}{\omega_{k_2+1}} \right).
\end{align*}

The smaller $\|U^* \bar{Q}_{k_1}\|_F^2 + \|\bar{P}_{k_2}^\top U^*\|_F^2$ is, the closer $U^*$ to $\mathcal{LR}(Q_{k_1}, P_{k_2})$ is. The above bound shows that to recover a low-rank matrix one should have large eigengaps ${\lambda_{k_1+1}}-{\lambda_{k_1}}$ and ${\omega_{k_2+1}} - {\omega_{k_2}}$. This occurs when the rows and columns of $X$ can be clustered into $k_1$ and $k_2$ clusters. Furthermore, one should also try to chose a metric $\phi$ (or $\ell_p$-norm) that minimizes $\phi(E)$. Clearly, the rank of $U^{*}$ is approximately $\min\{k_1,k_2\}$.

\section{Working of FRPCAG}\label{sec:working}

The previous section presented a theoretical analysis of our model. In this section, we explain in detail the working of our model for any data. Like any standard low-rank recovery method, such as \cite{candes2011robust}, our method is able to perform the following two operations for a clusterable data:
\begin{enumerate}
\item Penalization  of the singular values of the data. The penalization of the  higher singular values, which correspond to high frequency components in the data results in the data cleaning. 
\item Determination of the clean left and right singular vectors. 
\end{enumerate}

\subsection{FRPCAG is a singular value penalization method}
In order to demonstrate how FRPCAG penalizes the  singular values of the data we study another  way to cater the graph regularization in the solution of the optimization problem which is contrary to the one presented in Section~\ref{sec:optimization}. In Section \ref{sec:optimization} we used a gradient for the graph regularization terms $\gamma_{1}\tr(U\Larg_{1}U^\top) + \gamma_2 \tr(U^\top \Larg_{1}U)$ and used this gradient as an argument of the proximal operator for the soft-thresholding.  What we did not point out there was that the solution of the graph regularizations can also be computed by proximal operators. It is due to the reason that using proximal operators for graph regularization (that we present here) is more computationally expensive.   Assume that the prox of $\gamma_{1}\tr(U\Larg_{1}U^\top)$ is computed first, and let $Z$ be a temporary variable, then it can be written as: 
\begin{equation*}
\min_{Z} \|X-Z\|^2_F + \gamma_{1}\tr(Z\Larg_{1}Z^\top)
\end{equation*}
The above equation has a closed form solution which is given as:
$$Z = X(I + \gamma_1 \Larg_1)^{-1}$$
Now, compute the proximal operator for the term $\gamma_{2}\tr(U^\top \Larg_{2}U)$
\begin{equation*}
\min_{U} \|Z-U\|^2_F + \gamma_{2}\tr(U^\top \Larg_{2}U)
\end{equation*}
The closed form solution of the above equation is given as:
$$U = (I + \gamma_2 \Larg_2)^{-1} Z$$
Thus, the low-rank $U$ can be written as:
$$U = (I + \gamma_2 \Larg_2)^{-1} X (I + \gamma_1 \Larg_1)^{-1} $$
after this the soft thresholding can be applied on $U$.

Let the SVD of $X$, $X = V_x\Sigma_x W^{\top}_x $, $\Larg_1 = Q\Lambda Q^\top$ and $\Larg_2 = P\Omega P^\top$, then we get:
\begin{align*} 
 U  & = (I + \gamma_2 P\Lambda P^\top)^{-1} V_x\Sigma_x W^{\top}_x (I + \gamma_1 Q\Omega Q^\top)^{-1} \nonumber \\
 & = P(I + \gamma_2\Lambda)^{-1} P^\top V_x\Sigma_x W^{\top}_x Q(I + \gamma_1 \Omega)^{-1} Q^\top
\end{align*}
thus, each singular value $\sigma_{xi}$ of $X$ is penalized by $1/(1+\gamma_1 \lambda_i)(1+\gamma_2 \omega_i)$. Clearly, the above solution requires the computation of two inverses which can be computationally intractable for big datasets. 
 
 \subsection{Estimation of clean singular vectors}
The two graph regularization terms $\tr(U\Larg_{1} U^\top )$  $\tr(U^{\top}\Larg_{2} U^\top )$  encode a weighted penalization in the Laplacian basis. Again using $\Larg_{1} = Q\Lambda Q^\top$ and $\Larg_{2} = P\Omega P^\top$ and $U=V\Sigma W^{\top}$ be the SVD of $U$, we get
\begin{align}\label{eq:original}
\gamma_{1}&\tr(U\Larg_{1}U^\top) + \gamma_{2}\tr(U^\top \Larg_{2}U)\nonumber\\
= &  \gamma_{1}\tr(V\Sigma W^{\top}Q\Lambda Q^{\top}W\Sigma V^{\top}) + \gamma_{2}\tr(W\Sigma V^{\top}P\Omega P^{\top}V\Sigma W^{\top}) \nonumber \\
= & \gamma_{1}\tr(\Sigma W^{\top}Q\Lambda Q^{\top}W\Sigma) + \gamma_{2}\tr(\Sigma V^{\top}P\Omega P^{\top}V\Sigma) \nonumber \\
= & \gamma_{1}\tr(W^{\top}Q\Lambda Q^{T}W\Sigma^{2}) + \gamma_{2}\tr(V^{\top}P\Omega P^{\top}V\Sigma^{2}) \nonumber \\
= & \sum_{i,j = 1}^{\min\{n,p\}}\sigma^{2}_{i}(\gamma_{1}\lambda_j (w^{\top}_{i}q_j)^{2}+\gamma_{2}\omega_{j}(v^{\top}_{i}p_{j})^{2}),
\end{align} 

where $\lambda_j$ and $\omega_j$ are the eigenvalues in the matrices $\Lambda$ and $\Omega$ respectively. The second step follows from $V^{\top}V = I$ and the cyclic permutation invariance of the trace. In the standard terminology $w_i$ and $v_{i}$ are the principal components and principal directions of of the low-rank matrix $U$. From the above expression, the minimization is carried out with respect to the singular values $\sigma_i$ and the singular vectors $v_i, w_i$. The minimization has the following effect:
\begin{enumerate}
\item Minimize $\sigma_i$ by performing an attenuation with the graph eigenvalues as explained earlier.
\item When $\sigma_i$ is big, the principal components $w_i$ are aligned with the graph eigenvectors $q_j$ for small values of $\lambda_{j}$, i.e, the lower graph frequencies of $\Larg_1$. The principal directions $v_i$ are also aligned with the graph eigenvectors $p_j$ for small values of $\omega_{j}$, i.e, the lower graph frequencies of $\Larg_2$. This alignment makes sense as the higher eigenvalues correspond to the higher graph frequencies which constitute the noise in data. This is explained experimentally in the next section.
\end{enumerate}

\subsection{Experimental Justification of working of FRPCAG}\label{sec:aprox_lowrank}
Now we  present an experimental justification for the working  of this model, as described in the previous subsection. In summary we illustrate that:
\begin{enumerate}
\item The model recovers a close-to-low-rank representation.
\item The principal components and principal directions of $U$ align with the first few eigenvectors of their respective graphs, automatically revealing a low-rank and enhanced class structure.
\item  The singular values of the low-rank matrix obtained using our model closely approximate those obtained by nuclear norm based models even  in the presence of corruptions.
\end{enumerate}

Our justification relies mostly on the quality of the singular values of the low-rank representation and the alignment of the singular vectors with their respective graphs.  

%
    
     \begin{figure*}[htbp]
    \centering
        \centering
        \includegraphics[width=1.0\textwidth]{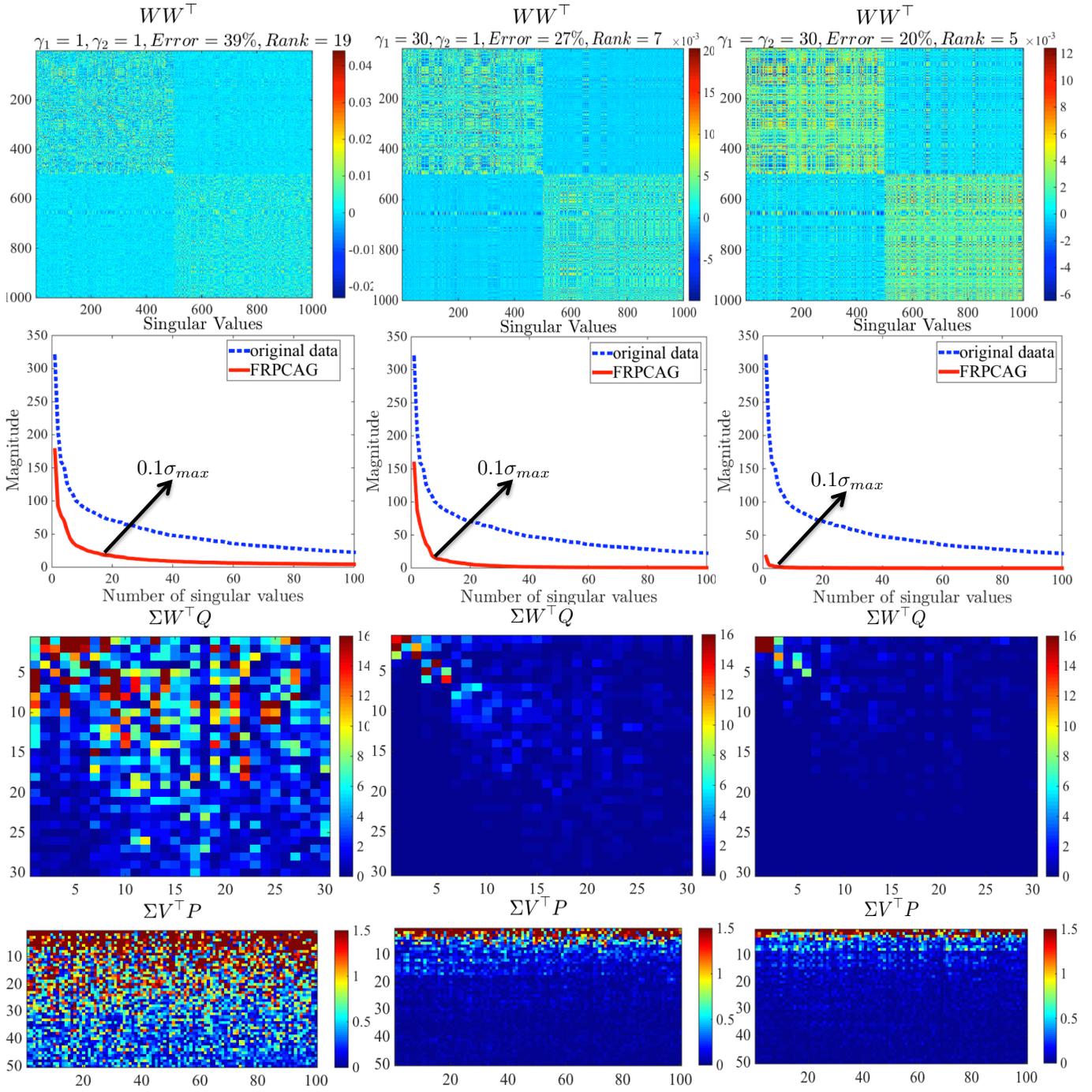}
         \caption{The matrices $WW^\top$, $\Sigma$, ${\Sigma}W^{\top}Q$, ${\Sigma}V^{\top}P$ and the corresponding clustering errors obtained for different values of the weights on the two graph regularization terms for 1000 samples of MNIST dataset (digits 0 and 1). If $U=V\Sigma W^{\top}$ is the SVD of $U$, then $W$ corresponds to the matrix of principal components (right singular vectors of $U$) and $V$ to the principal directions (left singular vectors of $U$). Let $\Larg_1 = Q\Lambda Q^{\top}$ and $\Larg_2 = P\Omega P^{\top}$ be  the eigenvalue decompositions of $\Larg_1$ and $\Larg_2$ respectively then $Q$ and $P$ correspond to the eigenvectors of Laplacians $\Larg_1$ and $\Larg_2$.  The block diagonal structure of $WW^{\top}$ becomes more clear by increasing $\gamma_{1}$ and $\gamma_{2}$ with a thresholding of the singular values in $\Sigma$. Further, the sparse structures of ${\Sigma}W^{\top}Q$ and ${\Sigma}V^{\top}P$ towards the rightmost corners show that the  number of left and right singular vectors which align with the eigenvectors of the Laplacians $\Larg_1$ and $\Larg_2$ go on decreasing with increasing $\gamma_1$ and $\gamma_2$. This shows that the two graphs help in attaining a low-rank structure with a low clustering error.}
        \label{fig:subspaces}
    \end{figure*}

We perform an experiment with 1000 samples of the MNIST dataset belonging to two different classes (digits 0 and 1). We vectorize all the digits and form a data matrix $X$ whose columns contain the digits. Then we compute a graph of samples between the columns of $X$ and a graph of features between the rows of $X$ as mentioned in Section \ref{sec:graphs}. We determine the clean low-rank $U$ by solving model \eqref{eq:proposed} and perform one SVD at the end $U = V\Sigma W^\top$. Finally, we do the  clustering  by performing k-means (k = 2) on the low-rank $U$. As argued in \cite{liu2013robust}, if the data is arranged according to the classes, the matrix $WW^\top $ (where $W$ are the principal components of the data) reveals the subspace structure. The matrix $WW^\top$ is also known as the shape interaction matrix (SIM) \cite{ji2015shape}.  If the subspaces are orthogonal then SIM  should acquire a block diagonal structure.  Furthermore, as explained in Section~\ref{sec:proposed_model} our model \eqref{eq:proposed} tends to align the first few principal components $w_i$ and principal directions $v_i$ of $U$ to the first few eigenvectors  $q_j$  and $p_j$ of $\Larg_1$ and $\Larg_2$ respectively. Thus, it is interesting to observe the matrices ${\Sigma}W^{\top}Q$ and ${\Sigma}V^{\top}P$ scaled with the singular values $\Sigma$ of the low-rank matrix $U$, as justified by eq. (\ref{eq:original}). This scaling takes into account the importance of the eigenvectors that are associated to bigger singular values.

Fig.~\ref{fig:subspaces} plots the matrix $WW^\top $, the corresponding clustering error, the matrices $\Sigma$, ${\Sigma}W^{\top}Q$, and ${\Sigma}V^{\top}P$  for different values of $\gamma_{1}$ and $\gamma_{2}$ from left to right. Increasing $\gamma_1$ and $\gamma_2$ from 1 to 30 leads to 1) the penalization of the singular values in $\Sigma$ resulting in a lower rank 2) alignment of the first few principal components $w_i$ and principal directions $v_i$ in the direction of the first few eigenvectors $q_j$ and $p_j$ of $\Larg_1$ and $\Larg_2$ respectively 3) an enhanced subspace structure in $WW^{\top}$ and 4) a lower clustering error. Together the two graphs help in acquiring a low-rank structure that is suitable for clustering applications as well. 

   \begin{figure*}[htbp]
    \centering
        \centering
        \includegraphics[width=1.0\textwidth]{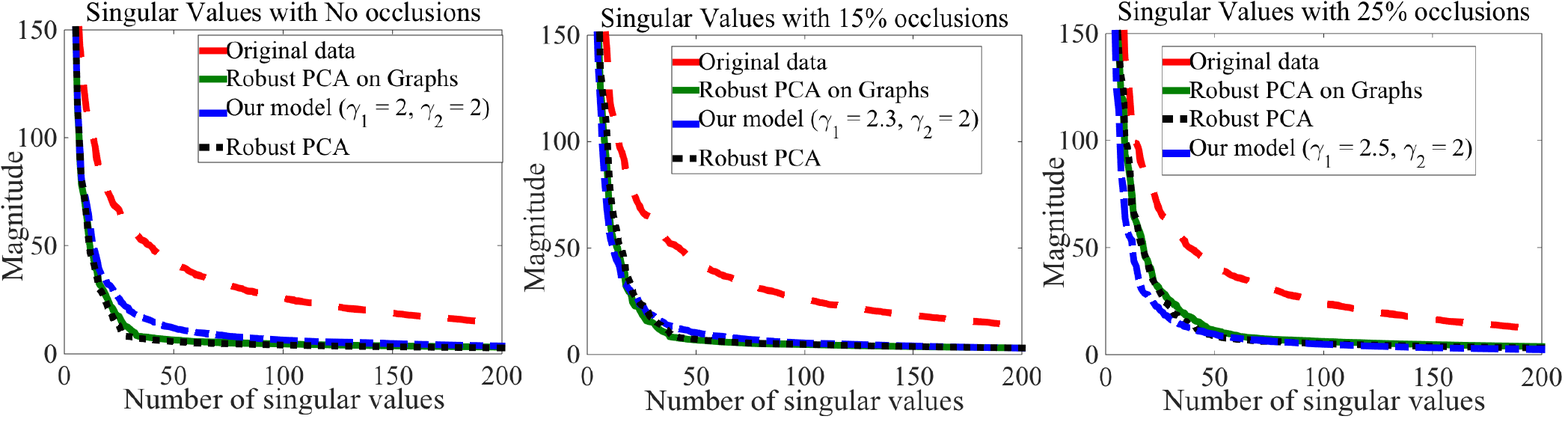}
     \caption{A comparison of singular values of the low-rank matrix obtained via our model, RPCA and RPCAG. The experiments were performed on the ORL dataset with different levels of block occlusions. The parameters corresponding to the minimum validation clustering error for each of the model were used.}
        \label{fig:svs_occlusions}
    \end{figure*}

    Next we demonstrate that for data with or without corruptions, FRPCAG is able to acquire singular values as good as the nuclear norm based models, RPCA and RPCAG. We perform three clustering experiments on 30 classes of ORL dataset with no block occlusions, 15\% block occlusions and 25\% block occlusions.  Fig.~\ref{fig:svs_occlusions} presents a comparison of the singular values of the original data with the singular values of the low-rank matrix obtained by solving RPCA, RPCAG and our model. The parameters for all the models are selected corresponding to the lowest clustering error for each model.  It is straightforward to conclude that the singular values of the low-rank representation using our fast method closely approximate those of the nuclear norm based models irrespective of the level of corruptions.
 

\section{Computational Complexity}\label{sec:complexity_o}
\subsection{Complexity of Graph Construction} For $n$ $p$-dimensional vectors, the computational complexity of the FLANN algorithm is $\mathcal{O}(pn K (\log(n)/\log(K)))$ for the graph $G_1$ between the samples and $\mathcal{O}(pn K (\log(p)/\log(K)))$ for the graph $G_2$ between the features, where $K$ is the number of nearest neighbors. This has been shown in  \cite{sankaranarayanan2007fast} \& \cite{muja2014scalable} .  For a fixed $K$ the complexity of $G_1$ is $\mathcal{O}(pn\log(n))$ and that of $G_2$ is $\mathcal{O}(np\log(p))$. We use $K=10$ for all the experiments reported in this work. The effect of the variation of $K$ on the performance of our model is studied briefly in Section~\ref{sec:experiments}.


\subsection{Algorithm Complexity}
\subsubsection{FISTA}
 Let $I$ denote the number of iterations for the algorithm to converge, $p$ is the data dimension, $n$ is the number of samples and $c$ is the rank of the low-dimensional space. The computational cost of our algorithm per iteration is linear in the number of data samples $n$, i.e.  $\mathcal{O}(Ipn)$ for $I$ iterations. 
 
\subsubsection{Final SVD}
Our model, in order to preserve convexity, finds an approximately low-rank solution $U$ without explicitly factorizing it. While this gives a great advantage, depending on the application we have in hand, we might need to provide explicitly the low dimensional representation in a factorized form. This can be done by computing an ``economic'' SVD of $U$ after our algorithm has finished. 

Most importantly, this computation can be done in time that scales linearly with the number of samples for a fixed number of features $p\ll n$. Let 
$U=V\Sigma W^{\top}$ the SVD of $U$. The orthonormal basis $V$ can be computed by the eigenvalue decomposition of the small $p\times p$ matrix $UU^\top = VEV^\top$ that also reveals the singular values $\Sigma = \sqrt{E}$ since $UU^\top$ is s.p.s.d. and therefore $E$ is non-negative diagonal. Here we choose to keep only the  $c$ biggest singular values and corresponding vectors according to the application in hand (the procedure for determining $c$ is explained in Section~\ref{sec:experiments}). Given $V$ and $\Sigma$ the sample projections are computed as $W = \Sigma^{-1}V^\top U$.

The complexity of this SVD is $\mathcal{O}(np^2)$ --due to the multiplication $UU^\top$-- and does not change the asymptotic complexity of our algorithm. Note that the standard economic SVD implementation in numerical analysis software typically does not use this simple trick, in order to achieve better numerical error. However, in most machine learning applications like the ones of interest in this paper, the compromise in terms of numerical error is negligible compared to the gains in terms of scalability.

\subsection{Overall Complexity}
The complexity of FISTA is $\mathcal{O}(Ipn)$, the graph $G_1$ is $\mathcal{O}(pn\log(n))$, $G_2$ is $\mathcal{O}(pn\log(p))$ and the final SVD step is $\mathcal{O}(np^2)$. Given that $p\ll n$, the overall complexity of our algorithm is $\mathcal{O}(pn(\log(n)+I+p + \log(p)))$. Table~\ref{tab:complexity}  presents the computational complexities of all the models considered in this work (discussed in Section~\ref{sec:experiments}).

\begin{table*}[htbp]
\caption{Computational complexity of all the models considered in this work. $I$ denotes the number of iterations for the algorithm to converge, $p$ is the dimension, $n$ is the number of samples and $c$ is the rank of the low-dimensional space.  All the models which use the graph $G_1$ are marked by '+'. The construction of graph $G_2$ is included only in our model (FRPCAG). Note that we have used complexity $\mathcal{O}(np^2)$ for all SVD computations and $\mathcal{O}(In)$ for approximate eigenvalue decomposition for NCut as proposed in \cite{shi2000normalized}, while the latter could be used for the decomposition needed by LE even though not specified in \cite{belkin2003laplacian}.}
\centering
\resizebox{0.9\textwidth}{!}{\begin{tabular}[t]{| c || c | c | c | c |}
\hline
\textbf{Model}  & \textbf{Complexity $G_1$} & \textbf{Complexity $G_2$} & \textbf{Complexity Algorithm} & \textbf{Overall Complexity} \\
                 &  $\mathcal{O}(np \log(n))$  & $\mathcal{O}(np\log(p))$  &for $p\ll n$  & for  $p\ll n$\\\hline\hline
                  FRPCAG  & + & + & $\mathcal{O}(np(I+p)) $ & $\mathcal{O}(np(\log(n)+p+I+\log(p)))$\\\hline
 NCut \cite{shi2000normalized}  & + &  -- & $\mathcal{O}(In)$ & $\mathcal{O}(n(p\log(n)+I))$ \\\hline 
LE  \cite{belkin2003laplacian}   &  + & -- & $\mathcal{O}(n^{3})$  &  $\mathcal{O}(n(p\log(n)+n^2))$ \\\hline 
PCA   &  -- & -- & $\mathcal{O}(p^{2}n)$ & $\mathcal{O}(np(p\log(n)+p))$ \\\hline
 GLPCA \cite{jiang2013graph} &  + & -- &  $\mathcal{O}(n^{3})$ &  $\mathcal{O}(n(p\log(n)+n^2))$ \\\hline
 NMF  \cite{lee1999learning}  & --  & -- & $\mathcal{O}(Inpc)$ &   $\mathcal{O}(Inpc)$ \\\hline
 GNMF  \cite{cai2011graph} & + & -- & $\mathcal{O}(Inpc)$  & $\mathcal{O}(np(Ic+\log(n)))$ \\\hline
 MMF  \cite{zhang2013low} & + & -- & $\mathcal{O}(((p+c)c^{2}+pc)I)$ & $\mathcal{O}(((p+c)c^{2}+pc)I+pn\log(n))$  \\\hline
 RPCA \cite{candes2011robust} & -- & --  & $\mathcal{O}(Inp^{2})$ & $\mathcal{O}(np(Ip+\log(n)))$\\\hline
 RPCAG \cite{shahid2015robust} & + & -- & $\mathcal{O}(Inp^{2})$ & $\mathcal{O}(np(Ip+\log(n)))$ \\\hline
\end{tabular}}
\label{tab:complexity}
\end{table*}

\subsection{Scalability}
The construction of graphs $G_1$ and $G_2$ is highly scalable. For small $n$ and $p$ the strategy 1 of Section~\ref{sec:graphs} can be used for the graphs construction and each of the entries of the adjacency matrix $A$ can be computed in parallel once the nearest neighbors have been found. For large $n$ an $p$ the approximate K-nearest neighbors scheme (FLANN) is used for graphs construction which is highly scalable as well. Next, our proposed FISTA algorithm for FRPCAG requires two important computations at every iteration: 1) computation of proximal operator $\prox_{\lambda h}(U)$ and 2) the gradient $\nabla g(Y)$. The former computation is given by the element-wise soft-thresholding (eq.~\eqref{eq:prox}) that can be performed in parallel for all the entries of a matrix.  The gradient computation, as given by eq.~\eqref{eq:grad}, involves matrix-matrix multiplications that involve sparse matrices $\Larg_1$ and $\Larg_2$ and can be performed very efficiently in parallel as well.

\section{Results}\label{sec:experiments}
Experiments were done using two open-source toolboxes: the UNLocBoX \cite{perraudin2014unlocbox} for the optimization part and the GSPBox \cite{perraudin2014gspbox} for the graph creation. The complete demo, code and datasets used for this work are available at \href{https://lts2.epfl.ch/research/reproducible-research/frpcag/}. We perform two types of experiments corresponding to two applications of PCA.
\begin{enumerate}
\item Data clustering in the low-dimensional space.
\item Low-rank recovery: Static background separation from videos.
\end{enumerate}
We present extensive quantitative results for clustering but currently our experiments for low-rank recovery are limited to qualitative analysis only. This is because our work on approximating the low-rank representation using graphs is the first of its kind.  The experiments on the low-rank background extraction from videos suffice as a proof-of-concept for the working of this model.

We perform our clustering experiments on 7 benchmark databases: 
CMU PIE, ORL, YALE, COIL20, MNIST, USPS and MFEAT.  CMU PIE, ORL and YALE are face databases with small pose variations. COIL20 is a dataset of objects with significant pose changes so we select the images for each object with less than 45 degrees of pose change. USPS and MNIST contain images of handwritten digits and MFeat consists of features extracted from handwritten numerals. The details of all datasets used are provided in Table~\ref{tab:datasets}. 


\begin{table}[htbp]
\caption{Details of the datasets used for clustering experiments in this work.}
\centering
\resizebox{0.4\textwidth}{!}{\begin{tabular}[t]{| c | c | c | c |}
\hline
 \textbf{Dataset}  & \textbf{Samples}  & \textbf{Dimension}  & \textbf{Classes} \\\hline
 CMU PIE   &  1200  & $32 \times 32$  & 30  \\\hline
 ORL    &  400   & $56 \times 46$   & 40  \\\hline
 COIL20  & 1400   &  $32 \times 32$  &  20  \\\hline
  YALE   & 165  &   $32 \times 32$  & 11  \\\hline
 MNIST    & 50000  & $28 \times 28$  & 10  \\\hline
 USPS   &  3500  &  $16 \times 16$  & 10  \\\hline
 MFEAT  & 400   & 409   & 10 \\\hline
\end{tabular}}
\label{tab:datasets}
\end{table}

In order to evaluate the robustness of our model to gross corruptions we corrupt the datasets with two different types of errors 1) block occlusions and 2) random missing pixels. Block occlusions of three different sizes, i.e, 15\%, 25\% and 40\% of the total size of the image are placed uniformly randomly in all the images of the datasets. Similarly, all the images of the datasets are also corrupted by removing 10\%, 20\%, 30\% and 40\% pixels uniformly randomly. Separate clustering experiments are performed for each of the different types of corruptions.

We compare the clustering performance of our model with 10 other models including the state-of-art: 1) k-means on original data 2) Normalized Cut (NCut) \cite{shi2000normalized} 3) Laplacian Eigenmaps (LE) \cite{belkin2003laplacian}  4) Standard PCA 5) Graph Laplacian PCA (GLPCA) \cite{jiang2013graph} 6) Manifold Regularized Matrix Factorization (MMF) \cite{zhang2013low}  7) Non-negative Matrix Factorization (NMF) \cite{lee1999learning} 8) Graph Regularized Non-negative Matrix Factorization (GNMF) \cite{cai2011graph} 9) Robust PCA (RPCA) \cite{candes2011robust} and 10) Robust PCA on Graphs (RPCAG) \cite{shahid2015robust}. For ORL, CMU PIE, COIL20, YALE and USPS datasets we compare three different versions of our model corresponding to the three types of graphs $G_1$ and $G_2$.

As mentioned in Section~\ref{sec:graphs}, FRPCAG(A) corresponds to our model using good quality sample and feature graphs, FRPCAG(B) to the case using a good quality sample graph and approximate feature graph, and FRPCAG(C) to the case where approximate graphs were used both between samples and features. All other models for these datasets are evaluated using a good quality sample graph $G_1$. Due to the large size of the MNIST dataset, we use FLANN (strategy 2) to construct both graphs, therefore we get approximate versions in the presence of corruptions. Thus the experiments on MNIST dataset are kept separate from the rest of the datasets to emphasize the difference in the graph construction strategy.  We also perform a separate set of experiments on the ORL dataset  and compare the performance of our model with state-of-the-art nuclear norm based models, RPCA \cite{candes2011robust} and RPCAG \cite{shahid2015robust}, both with a good quality and an approximate graph. We perform this set of experiments only on the ORL dataset (due to its small size) as the nuclear norm based models are computationally expensive. Finally, the experiments on MFeat dataset are only performed with missing values because block occlusions in non-image datasets correspond to an unrealistic assumption. The computational complexities of all these models are presented in Table~\ref{tab:complexity}.

\textbf{Pre-processing:} All datasets are transformed to zero-mean and unit standard deviation along the features for the RPCA, RPCAG and FRPCAG. For MMF the samples are additionally normalized to unit-norm. For NMF and GNMF only the unit-norm normalization is applied to all the samples of the dataset.

\textbf{Evaluation:} We use \textit{clustering error} as a metric to compare the clustering performance of various models. NCut, LE, PCA, GLPCA, MMF, NMF and GNMF are matrix factorization models that explicitly learn the principal components $W$. The clustering error for these models is evaluated by performing k-means on the principal components. RPCA, RPCAG and FRPCAG learn the low-rank matrix $U$. The clustering error for these models can be  evaluated by performing k-means on 1) principal components $W$ obtained by the SVD of the low-rank matrix $U = V\Sigma W^{\top}$ or 2) the low-rank $U$ directly. Note that RPCA and RPCAG determine the exact low-rank representation $U$, whereas our model only shrinks singular values and therefore only recovers an approximate low-rank representation $U$. Thus,  if one desires to use the principal components $W$ for clustering, the  dimension of the subspace (number of columns of W)  can be decided by selecting the number of singular values greater than a particular threshold. However, this procedure requires SVD and can be expensive for big datasets. Instead, it is more feasible to perform clustering on the low-rank $U$ directly.  We observed that similar clustering results are obtained by using either $W$ or $U$, however, for brevity these results are not reported.  Due to the non-deterministic nature of k-means, it is run $10$ times and the minimum error over all runs is reported.

\textbf{Parameter selection for various models:} Each model has several parameters which have to be selected in the validation stage of the experiment. To perform a fair validation for each of the models we use a range of parameter values as presented in Table~\ref{tab:models_param}. For a given dataset, each of the models is run for each of the parameter tuples in this table and the parameters corresponding to minimum clustering error are selected for testing purpose. Furthermore, PCA, GLPCA, MMF, NMF and GNMF are non-convex models so they are run $10$ times for each of the parameter tuple. RPCA, RPCAG and FRPCAG are convex so they are run only once.

\begin{table}[htbp]
\footnotesize
\caption{Range of parameter values for each of the models considered in this work. $c$ is the rank or dimension of subspace, $\lambda$ is the weight associated with the sparse term for Robust PCA framework \cite{candes2011robust} and $\gamma$ is the parameter associated with the graph regularization term.}
\centering
\resizebox{0.45\textwidth}{!}{\begin{tabular}[t]{| c | c | c | } \hline
  \textbf{Model}   & \textbf{Para-}   & \textbf{Parameter} \\
                   & \textbf{meters}  & \textbf{ Range}   \\\hline
    NCut  \cite{shi2000normalized} &  &   \\\cline{1-1}
 LE \cite{belkin2003laplacian} & $c$ &  $c\in \{2^{1},2^{2},\cdots, \min(n,p) \}$  \\\cline{1-1}
  PCA    & &    \\\hline
  GLPCA  \cite{jiang2013graph}  &  & $c\in \{2^{1},2^{2},\cdots, \min(n,p) \}$    \\
   & $c,\gamma$ & $\gamma \implies \beta$ using  \cite{jiang2013graph} \\
        &       &  $\beta \in \{0.1, 0.2, \cdots, 0.9\}$  \\\hline
MMF  \cite{zhang2013low}      & $c,\gamma$ &  $c\in \{2^{1},2^{2},\cdots, \min(n,p) \}$ \\\cline{1-2}
 NMF  \cite{lee1999learning} &  $c$ &   \\\cline{1-2}
 GNMF \cite{cai2011graph}     & $c,\gamma$ &  $\gamma \in \{2^{-3},2^{-2},\cdots, 2^{10}\}$    \\\hline
 RPCA \cite{candes2011robust}  & $\lambda$  & $\lambda \in \{\frac{2^{-3}}{\sqrt{\max(n,p)}}:$ \\
                               &          & $0.1:\frac{2^{3}}{\sqrt{\max(n,p)}}\}$   \\\cline{1-2}
 RPCAG \cite{shahid2015robust} & $\lambda, \gamma$ & $\gamma \in \{2^{-3},2^{-2},\cdots, 2^{10}\}$    \\\hline
 FRPCAG &  $\gamma_{1}, \gamma_{2}$ & $\gamma_{1},\gamma_{2} \in \{1,2,\cdots, 100\}$    \\\hline
\end{tabular}}
\label{tab:models_param}
\end{table}

\textbf{Parameter selection for Graphs:} For all the experiments reported in this paper we use the following parameters for graphs $G_1$ and $G_2$. K-nearest neighbors = 10 and  $\sigma^{2} =1$. It is important to point out here that different types of data might call for slightly different parameters for graphs. However, for a given dataset, the use of same graph parameters (same graph quality) for all the graph regularized models ensures a fair comparison. 

\subsection{Clustering}

\begin{figure*}[htbp]
    \centering
        \centering
        \includegraphics[width=1.0\textwidth]{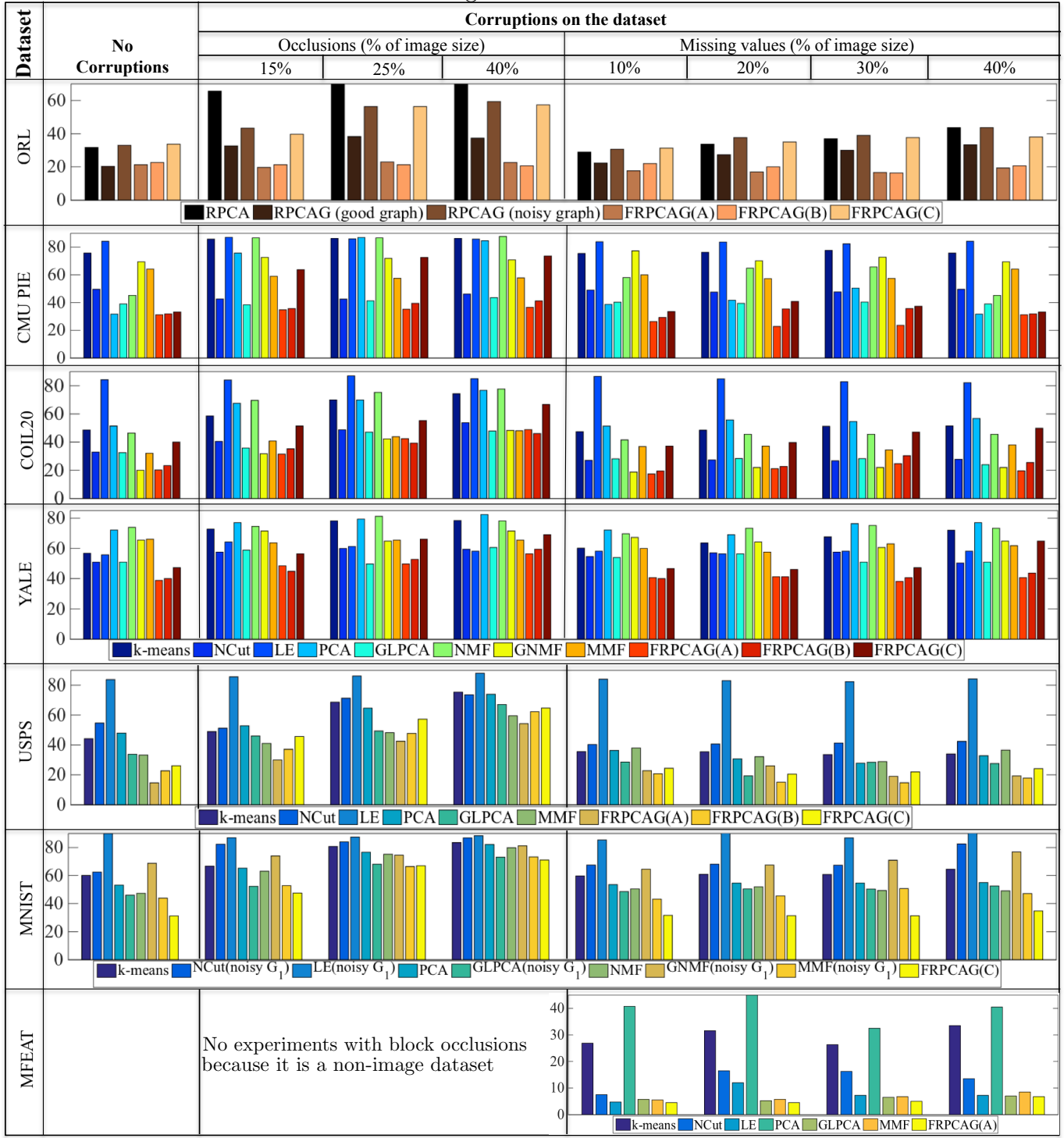}
        \caption{A comparison of  clustering  error of our model with various dimensionality reduction models. The image data sets include: 1) ORL 2) CMU PIE  3) COIL20 and 4) YALE. The compared models are: 1) k-means 2) Normalized Cut (NCut) 3) Laplacian Eigenmaps (LE) \cite{belkin2003laplacian}  4) Standard Principal Component Analysis (PCA)  5) Graph Laplacian PCA (GLPCA) \cite{jiang2013graph}    6) Non-negative Matrix Factorization \cite{lee1999learning}  7) Graph Regularized Non-negative Matrix Factorization (GNMF) \cite{cai2011graph}  8) Manifold Regularized Matrix Factorization (MMF) \cite{zhang2013low} 9) Robust PCA (RPCA) \cite{candes2011robust} 10) Fast Robust PCA on Graphs (A) 11)  Fast Robust PCA on Graphs (B) and 12)  Fast Robust PCA on Graphs (C). Two types of corruptions are introduced in the data: 1) Block occlusions and 2) Random missing values. NCut, LE, GLPCA, MMF and GNMF are evaluated with a good sample graph $G_1$. FRPCA(A) corresponds to our model evaluated with a good sample and a good feature graph, FRPCA(B) to a good sample graph and a noisy feature graph and FRPCA(C) to a noisy sample and feature graph. NMF and GNMF require non-negative data so they were not evaluated for the USPS and MFeat datasets because they are negative as well. MFeat is a non-image dataset so it is not evaluated with block occlusions. Due to the large size of the MNIST dataset, we use FLANN algorithm (strategy 2) to construct the graphs, therefore we get noisy graphs in the presence of corruptions. }
        \label{fig:errors}
    \end{figure*}


\subsubsection{Comparison with Matrix Factorization Models}
 Fig.~\ref{fig:errors}  presents the clustering error for various matrix factorization and our proposed model. NMF and GNMF are not evaluated for the USPS and MFeat datasets as they are not originally non-negative. It can be seen that our proposed model FRPCAG(A) with the two good quality graphs performs better than all the other models in most of the cases both in the presence and absence of data corruptions. Even FRPCAG(B) with a good sample graph $G_1$ and a noisy feature graph $G_2$ performs reasonably well. This shows that our model is quite robust to the quality of graph $G_2$. However, as expected FRPCAG(C) performs worse for ORL, CMU PIE, COIL20, YALE and USPS datasets as compared to other models evaluated with a good sample graph $G_1$. Finally, our model outperforms others in most of the cases for the interesting case of MNIST dataset where both graphs $G_1$ and $G_2$ are noisy for all models under consideration. It is worth mentioning here that even though the absolute errors are quite high for FRPCAG on the MNIST dataset, it performs relatively better than the other models. As PCA is mostly used as a feature extraction or a pre-processing step for a variety of machine learning algorithms, a better absolute classification performance can be obtained for these datasets by using FRPCAG as a pre-processing step for supervised algorithms as compared to other PCA models.

\begin{figure*}[htbp]
    \centering
        \centering
        \includegraphics[width=1.0\textwidth]{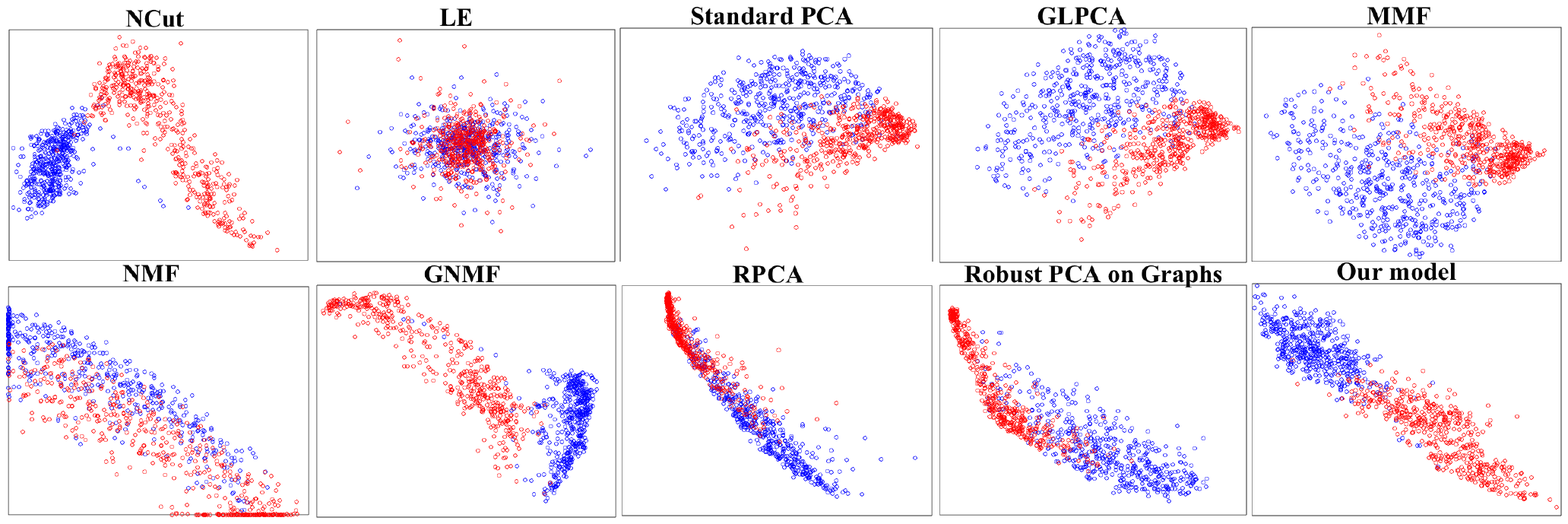}
         \caption{Principal Components of 1000 samples of digits 0 and 1 of the MNIST dataset in 2D space. For this experiment all the digits were corrupted randomly with 15\% missing pixels. Our proposed model (lower right) attains a good separation between the digits which is comparable and even better than other state-of-the-art dimensionality reduction models.}
        \label{fig:pcs}
    \end{figure*}
    
     \begin{figure*}[htbp]
    \centering
        \centering
        \includegraphics[width=1.0\textwidth]{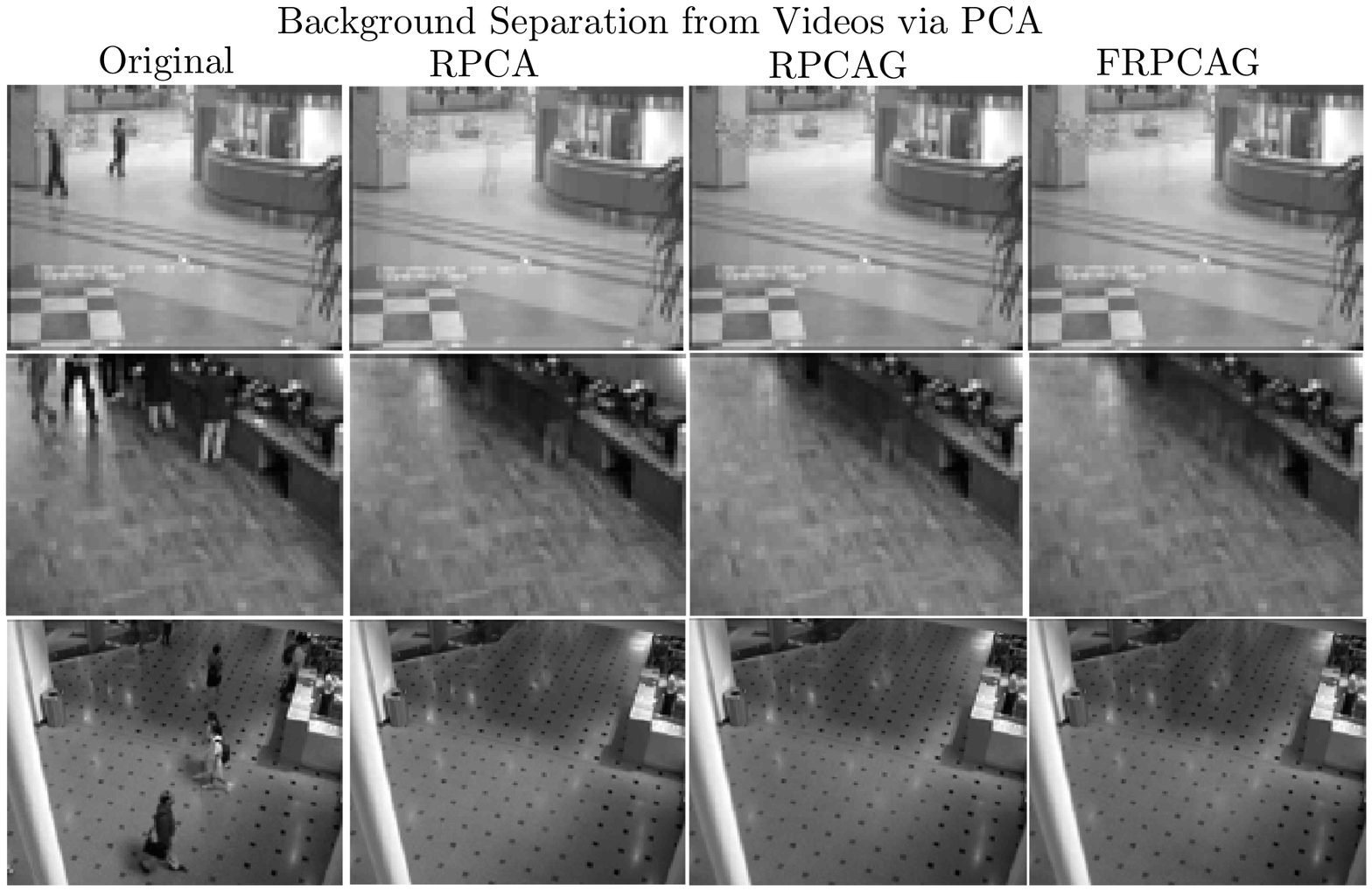}
         \caption{Static background separation from three videos. Each row shows the actual frame (left), recovered static low-rank background using RPCA, RPCAG and our proposed model. The first row corresponds to the video of a restaurant food counter, the second row to the shopping mall lobby and the third to an airport lobby. In all the three videos the moving people belong to the sparse component. Thus, our model is able to accurately separate the static portion from the three frames as good as the RPCAG. Our model converged in less than 2 minutes for each of the three videos, whereas RPCA and RPCAG converged in more than 45 minutes.}
        \label{fig:videos}
    \end{figure*}

\subsubsection{Comparison with Nuclear Norm based Models}
Fig.~\ref{fig:errors} also presents a comparison of the clustering error of our model with nuclear norm based models, i.e, RPCA and RPCAG for ORL dataset. This comparison is of specific interest because of the convexity of all the algorithms under consideration. As these models require an expensive SVD step on the whole low-rank matrix at every iteration of the algorithm, these experiments are performed on small ORL dataset. Clearly, our proposed model FRPCAG(A)  performs better than the nuclear norm based models even in the presence of large fraction of gross errors. Interestingly, even FRPCAG(B) with a noisy graph $G_2$ performs better than RPCAG with a good graph $G_1$. Furthermore, the performance of FRPCAG(C) with two noisy graphs is comparable to RPCAG with noisy graph, but still better than RPCA.

\subsection{Principal Components}
Fig.~\ref{fig:pcs} shows the principal components of $1000$ samples of MNIST dataset in two dimensional space obtained by various dimensionality reduction models. $500$ samples of digit $0$ and $1$ each are chosen and randomly corrupted by $15\%$ missing pixels for this experiment. Clearly, our proposed model attains a good separation between the digits 0 and 1 (represented by blue and red points respectively) comparable with other state-of-the-art dimensionality reduction models.

\subsection{Effect of the number of nearest neighbors for graphs}
In order to demonstrate the effect of number of nearest neighbors $K$ on the clustering performance of our model we perform a small experiment on the ORL dataset which has 400 images corresponding to 40 classes (10 images per class). We perform clustering for different values of $K = 5, 10, 25, 40$. The clustering errors are $17.5\%, 17\%, 23\%$ and $31\%$ respectively. Interestingly the minimum clustering error occurs for $K = 5, 10$ which is less or equal to the number of images per class. Thus, when the number of nearest neighbors $K$ is approximately equal to or less than the number of images per class then the images of the same class are more well connected and those across the classes have weak connections. This results in a lower clustering error. A good way to set $K$ is to use some prior information about the average number of samples per class or the rank of the dataset. For our experiments we use $K = 10$ for all the datasets and this value works quite well. The value of $K$ also depends on the number of data samples.  For big datasets, sparser graphs (obtained with lower values of $K$) tend to be more useful. For example, our experiments show that for the MNIST dataset (70,000 samples), $K=10$ is again a good value, even though the average number of samples per class is 7000. 
  
\subsection{Static background separation from videos}
In order to demonstrate the effectiveness of our model to recover low-rank static background from videos we perform experiments on 1000 frames of 3 videos available online. All the frames are vectorized and arranged in a matrix $X$ whose columns correspond to frames. The graph $G_{1}$ is constructed between the 1000 frames (columns of $X$) of the video and the graph $G_{2}$ is constructed between the pixels of the frames (rows of $X$) following the methodology of Section~\ref{sec:graphs}. Both graphs for all the videos are constructed without the prior knowledge of the mask of sparse errors (moving people). Fig.~\ref{fig:videos} shows the recovery of low-rank  frames for one actual frame of each of the videos. The leftmost plot in each row shows the actual frame, the other three show the recovered low-rank representations using RPCA, RPCAG and our proposed model (FRPCAG). The first row corresponds to a frame from the video of a restaurant food counter, the second row to the shopping mall lobby and the third row to an airport lobby. In each of the three plots it can be seen that our proposed model is able to separate the static backgrounds very accurately from the moving people which do not belong to the static ground truth.  Our model converged in less than 2 minutes for each of the three videos, whereas RPCA and RPCAG converged in more than 45 minutes. 


\subsection{Computational Time}
Table~\ref{tab:time} presents the computational time and number of iterations for the convergence of  FRPCAG, RPCAG and RPCA on different sizes and dimensions of the datasets. We also present the time needed for the graph construction. The computation is done on a single core machine with a 3.3 GHz processor without using any distributed or parallel computing tricks. An $\infty$ in the table indicates that the algorithm did not converge in 4 hours. It is notable that our model requires a very small number of iterations to converge irrespective of the size of the dataset. Furthermore, the model is orders of magnitude faster than RPCA and RPCAG. This is clearly observed from the experiments on MNIST dataset  where our proposed model is 100 times faster than RPCAG. Specially for MNIST dataset with 25000 samples, RPCAG and RPCA did not converge even in 4 hours whereas FRPCAG converged in less than a minute. 

\begin{table*}[htbp]
\caption{Computation times (in seconds) for graphs $G_{1}$, $G_{2}$, FRPCAG, RPCAG, RPCA and the number of iterations to converge for different datasets. The computation is done on a single core machine with a 3.3 GHz processor without using any distributed or parallel computing tricks. $\infty$ indicates that the algorithm did not converge in 4 hours. }
\centering
\resizebox{0.95\textwidth}{!}{\begin{tabular}[t]{| c | c | c | c | c | c | c | c | c | c | c | c |}
\hline
 \textbf{Dataset}  & \textbf{Samples}  & \textbf{Features}  & \textbf{Classes} & \multicolumn{2}{c |}{\textbf{Graphs}} & \multicolumn{2}{c | }{\textbf{FRPCAG}}  &  \multicolumn{2}{c | }{\textbf{RPCAG}} & \multicolumn{2}{c |}{\textbf{RPCA}} \\\cline{5-12}
 &     &      &      &  \textbf{$G_1$}  & \textbf{$G_2$}  & \textbf{time}  & \textbf{Iters}  & \textbf{time}  & \textbf{Iters} & \textbf{time}  & \textbf{Iters} \\\hline
 MNIST  &  5000    &   784    & 10 & 10.8   &  4.3  &   13.7   & 27  &  1345  & 325 & 1090   & 378 \\\hline
 MNIST & 15000   & 784       & 10    &   32.5   &  13.3   & 35.4 & 23 &  3801  & 412   &  3400 & 323 \\\hline
 MNIST & 25000   & 784       & 10 & 40.7   & 22.2   & 58.6   & 24  & $\infty$   &  $\infty$   &  $\infty$  & $\infty$ \\\hline
 ORL  & 300      & 10304   & 30 &  1.8   & 56.4    & 4.7 & 12 &  360  & 301   & 240  &  320  \\\hline
 USPS & 3500     &  256   & 10  & 5.8   & 10.8   & 1.76  & 16  & 900  &  410  &  790  & 350 \\\hline
 US census  & 2.5 million & 68   & -  & 540  & 42.3  & 3900 & 200  &  $\infty$ &  $\infty$ &  $\infty$ &  $\infty$ \\\hline
\end{tabular}}
\label{tab:time}
\end{table*}

 To demonstrate the scalability of our model for big datasets, we perform an experiment on the US census 1990 dataset available at the UCI machine learning repository. This dataset consists of approximately 2.5 million samples and 68 features. The approximate K-nearest neighbors graph construction strategy using the FLANN algorithm took only 540 secs to construct $G_{1}$ between 2.5 million samples and 42.3 secs. to construct $G_{2}$ between 68 features. We do not compare the performance of this model with other state-of-the-art models as the ground truth for this dataset is not available. However, we run our algorithm in order to see how long it takes to recover a low-rank representation for this dataset. It took 65 minutes and 200 iterations for the algorithm to converge on a single core machine with 3.3 GHz of CPU power.


 \vspace{-0.3cm}
  
\section{Conclusion}
We present Fast Robust PCA on Graphs (FRPCAG), a fast dimensionality reduction algorithm for mining clusters from high dimensional and large low-rank datasets. The idea lies on the novel concept of low-rank matrices on graphs. The power of the model lies in its ability to effectively exploit the hidden information about the intrinsic dimensionality of the smooth low-dimensional manifolds on which reside the clusterable signals and features of the data. Therefore, it targets an approximate recovery of low-rank signals by exploiting the local smoothness assumption of the samples and features of the data via graph structures only. In short our method leverages 1) smoothness of the samples on a sample graph and 2) smoothness of the features on a feature graph. The proposed method is convex, scalable and efficient and tends to outperform several other state-of-the-art exact low-rank recovery methods in clustering tasks that use the expensive nuclear norm. In an ordinary clustering task FRPCAG is approximately 100 times faster than nuclear norm based methods. The double graph structure also plays an important role towards the robustness of the model to gross corruptions. Furthermore, the singular values of the low-rank matrix obtained via FRPCAG closely approximate those obtained via nuclear norm based methods.

\section*{Acknowledgement}
The work of Nauman Shahid and Nathanael Perraudin is supported by the SNF grant no. 200021\_154350/1 for the project ``Towards signal processing on graphs''. The work of G. Puy is funded by the FP7 European Research Council Programme, PLEASE project, under grant ERC-StG-2011-277906. We would also like to thank Benjamin Ricaud for his valuable  suggestions to improve the paper.

\bibliographystyle{ieee}
\bibliography{pcabib}

\end{document}